\theoremstyle{remark}
\newtheorem{lemma}{Lemma}
\newtheorem{assumption}{Assumption}
\newcommand{\changed}[1]{\textcolor{black}{#1}}
\newcolumntype{Y}{>{\centering\arraybackslash}X}
\renewcommand{\arraystretch}{1.2}
\title{LEMON-Mapping: Loop-Enhanced Large-Scale Multi-Session Point Cloud Merging and Optimization for Globally Consistent Mapping}
\author{
        Lijie Wang$^{1}$,
        Xiaoyi Zhong$^{1}$,
        Ziyi Xu$^{1}$,
        Kaixin Chai$^{2}$,
        Anke Zhao$^{1}$,
        Tianyu Zhao$^{1}$,
        Changjian Jiang$^{3}$,
        Qianhao Wang$^{1,\dag}$,
        Xieyuanli Chen$^{4, \dag}$
	      Fei Gao$^{1, \dag}$
\thanks{$^1$Institute of CyberSystems and Control, Zhejiang University, Hangzhou 310027, China.  (e-mail:
3210101760@zju.edu.cn)
}
\thanks{$^2$The Huzhou Institute, Zhejiang University, Huzhou 313000, China.}
\thanks{$^3$The State Key Laboratory of Industrial Control Technology, College of Control Science and Engineering, Zhejiang University, Hangzhou 310027, China.}
\thanks{$^4$The College of Intelligence Science and Technology, National University of Defense Technology, Changsha 410007, China.}
\thanks{$^{\dag}$Corresponding author: Qianhao Wang, Xieyuanli Chen and Fei Gao. Email:  fgaoaa@zju.edu.cn.}

}
\let\@oldmaketitle\@maketitle
\renewcommand{\@maketitle}
{
\@oldmaketitle 
\centering
\setcounter{figure}{0}
\begin{minipage}{1.0\linewidth}
	\includegraphics[width=1.0\textwidth]{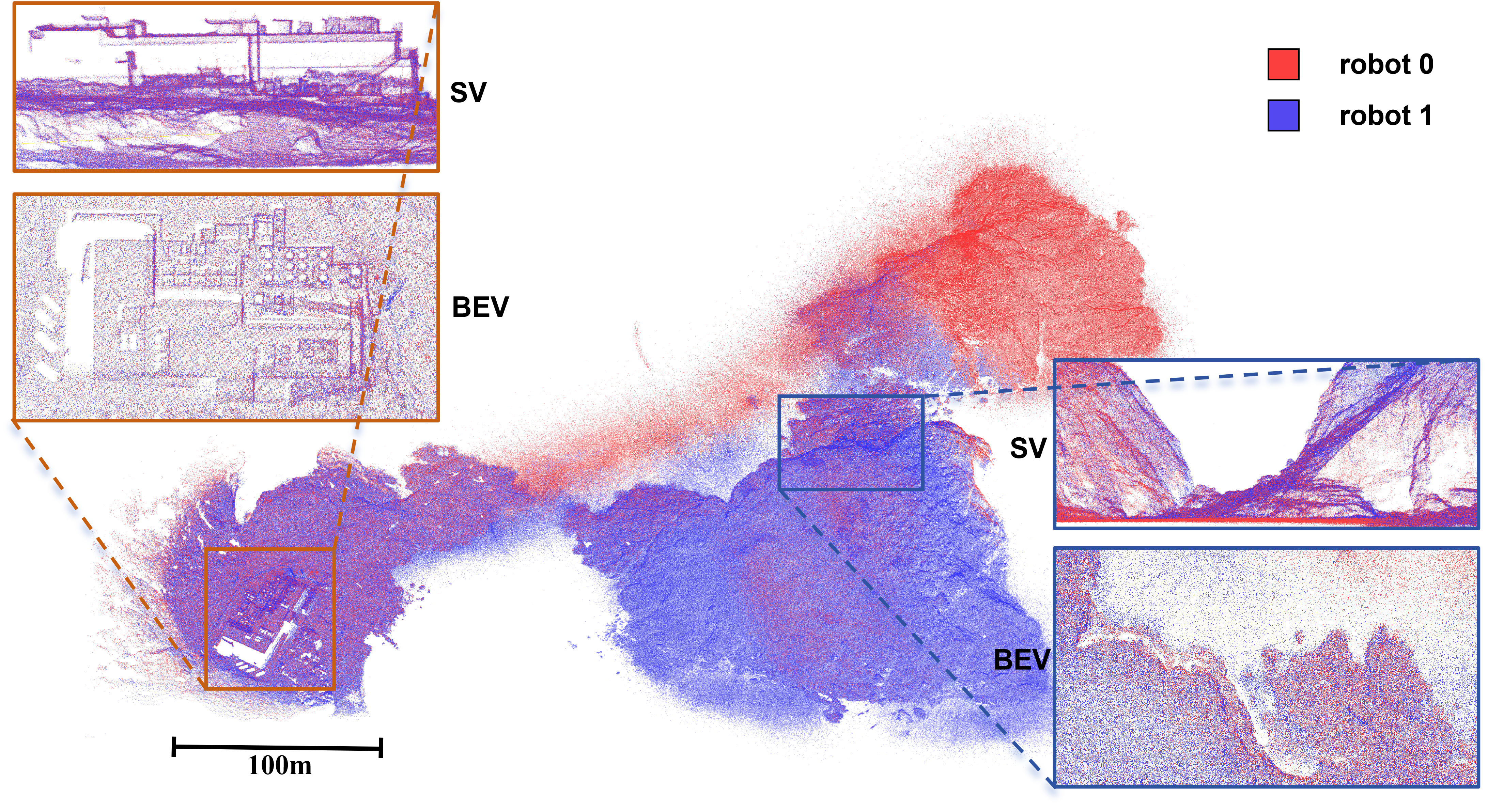}
	\vspace{-0.3cm}
	\captionof{figure}{The \changed{map merging result} of our framework in island sequence of MARS-LVIG \cite{lvig} dataset, the details in the figure are framed
and shown in two forms: side view (SV) and bird’s eye view (BEV).}
	\label{fig:headimg}
\end{minipage}
\vspace{-0.5cm}
}
\def\ntpname{Note to Practitioners}
\def\ntp{\normalfont
    \if@twocolumn
      \@IEEEabskeysecsize\bfseries\textit{\ntpname}---\relax
    \else
      \bgroup\par\addvspace{0.5\baselineskip}\centering\vspace{-1.78ex}%
      \@IEEEabskeysecsize\textbf{\ntpname}\par\addvspace{0.5\baselineskip}\egroup
      \quotation\@IEEEabskeysecsize
    \fi\@IEEEgobbleleadPARNLSP}
\def\endntp{\relax\ifCLASSOPTIONconference\vspace{0ex}\else\vspace{1.34ex}\fi\par
    \if@twocolumn\else\endquotation\fi
    \normalfont\normalsize}
\begin{document}
\maketitle

\begin{abstract}
Multi-robot collaboration is critical but challenging for building globally consistent maps. Traditional multi-robot pose graph optimization (PGO) methods ensure basic global consistency but ignore the geometric structure of the map and only use loop closures as constraints between pose nodes, which leads to divergence and blurring in overlapping regions. To address this, we propose LEMON-Mapping, a loop-enhanced framework for large-scale, multi-session point cloud fusion and optimization. We re-examine the role of loops in multi-robot mapping and present three key innovations. First, we develop a robust loop processing mechanism with outlier rejection and a recall strategy to recover valid loops. Second, we introduce spatial bundle adjustment to reduce divergence and eliminate blurring in overlapping areas. Third, we design a PGO-based optimization that integrates refined bundle adjustment constraints to propagate local accuracy globally. Experiments on public and self-collected datasets demonstrate that LEMON-Mapping achieves superior accuracy, consistency, and scalability over traditional approaches in large-scale multi-robot scenarios.
\end{abstract}

\begin{ntp}
In this paper, we address inaccurate and inconsistent large-scale multi-robot map fusion, where conventional multi-robot SLAM ignores geometric structures, causing divergence and blurring in overlapping regions.
In real-world exploration, multi-robot mapping is designed to extend coverage, but inconsistent and inaccurate map fusion degrades map quality and affects downstream applications like re-localization, navigation, and robotic operation, potentially leading to navigation failures and unsafe behaviors. LEMON-Mapping integrates robust loop processing, spatial bundle adjustment, and pose graph optimization to reconstruct accurate and globally consistent multi-robot maps, providing reliable support for multi-robot collaboration, localization, and autonomous navigation.    
\end{ntp}

\begin{IEEEkeywords}
Localization and Mapping, Multi-Robot SLAM, Swarms
\end{IEEEkeywords}

\section{Introduction}
\IEEEPARstart{L}ARGE-SCALE 3D mapping is a fundamental capability in modern robotics, providing rich geometric information that supports tasks such as drone-based inspection \cite{SRFlight, swarmlio, swarmlio2}, autonomous driving \cite{driving}, and long-term exploration with ground robots \cite{rogmap}.
Furthermore, large-scale 3D maps are critical for emerging fields such as embodied AI, where agents interact with complex environments based on spatial understanding \cite{iplanner}. They also play a key role in end-to-end visuomotor navigation systems \cite{viplanner}, which rely on accurate environmental priors to enhance generalization. 
Especially, multi-robot 3D mapping is essential for large-scale and complex tasks, where robot teams provide greater coverage and robustness compared to a single agent. In scenarios such as search and rescue \cite{uavrescue}, forest monitoring \cite{forest}, and subterranean exploration \cite{darpa}, accurate point cloud fusion is vital to enable cooperation among multiple robots under complex and GPS-denied conditions.

Rapid development of LiDAR-based Simultaneous Localization and Mapping (SLAM) techniques \cite{loam, slict, fastlio2, fasterlio, dlio, pointlio, slim, lta-om, voxel-slam} has enabled individual robots to perform real-time localization and point cloud map construction. However, most of the existing lightweight SLAM systems \cite{fastlio2, fasterlio, dlio} for onboard computers are based on direct LiDAR-Inertial-Odometry methods. These methods prioritize pose estimation over map construction to achieve real-time performance. However, the lack of feature extraction and matching may result in maps with blurred details and poor geometric accuracy. Therefore, the quality of the point cloud maps reconstructed by onboard computers based on lightweight methods requires further improvement.


\changed{To improve the geometric accuracy and quality of point cloud maps, various strategies have been proposed.} Among them, one category of approaches leverages pose graph optimization (PGO) \cite{g2o} to handle loop scenarios. Loop closure constraints detected by loop detection methods \cite{std,btc,ring++,overlapnet} are added to the pose graph along with odometry constraints to eliminate accumulated drift and improve global consistency of the map. However, PGO-based methods only use the point cloud map to extract descriptors during loop detection and ignore the structural and geometric information. \textbf{\textit{Therefore, although PGO-based methods can improve global consistency based on loop closures, it cannot guarantee an accurate map without divergence and blurring.}}
In contrast, another type of map refinement method based on bundle adjustment (BA)
has demonstrated impressive capabilities to improve the accuracy of point cloud maps in recent years \cite{balm1, balm2, hba, pss-ba, BALM3.0}. These approaches utilize the structural characteristics of the point cloud map and improve its quality by minimizing the geometric residuals between feature points and the feature structure.
\textbf{\textit{Although BA-based methods are capable of producing high-quality maps, their reliance on temporal sequences may prevent them from fully leveraging loop closure information, potentially resulting in poor map consistency.}} \changed{This dependence on the sequential poses also limits the applicability of BA-based methods to be directly used in multi-robot systems.}

In addition to the above mapping approaches, to achieve larger-scale map reconstruction and multi-robot collaboration, several multi-robot SLAM systems \cite{dclslam, discoslam} and offline map fusion techniques \cite{LAMM, automerge, pcm1, pcm2, pcm3, lidarvggt} have shown promising results in recent years. In general, these methods focus on two aspects: removing outliers in loop closure data and achieving map fusion through multi-robot PGO. However, PGO-based methods are fundamentally the same in single-robot and multi-robot applications, as they both use odometry constraints and loop closure constraints for map refinement. As discussed in the previous paragraph, these approaches neither directly utilize the geometric information in multi-robot maps nor fully leverage loop closure data.
Consequently, they only ensure the basic alignment and consistency of the global map but fail to \changed{reconstruct} high-quality point cloud maps. In particular, they often suffer from misalignment in regions with overlap between robots. \changed{Recently, several life-long SLAM systems \cite{voxel-slam, slim} handle multi-session mapping via a PGO-then-full-BA paradigm, where PGO and global BA are loosely coupled. To maintain controllable time costs, this pipeline relies on data sparsification or aggregation. Consequently, it is sensitive to initial PGO priors and often fails to achieve divergence-free and globally consistent map merging in multi-robot scenarios.}

\changed{Ultimately, whether applied independently or combined in current loosely coupled paradigms, existing PGO and BA methods often force a trade-off between high-resolution map accuracy and global consistency in multi-robot scenarios.} Building upon the strengths of existing methods and aiming to address their limitations, we perform an in-depth analysis of previous multi-robot SLAM systems and map merging techniques. The key issue of multi-robot mapping is to solve the \textbf{\textit{divergence}} and \textbf{\textit{blurring}} problems of submaps, while maintaining the \textbf{\textit{global consistency}} of the whole map.
We argue that the merging of point cloud maps is essentially a map-driven local registration problem rather than a pose-dominated pose optimization problem.
Building upon this perspective, we identify two essential challenges for achieving effective multi-robot point cloud map fusion: \textbf{(1)} Map fusion should be treated as a map-driven registration problem rather than a pose-centric optimization task such as PGO. \textbf{(2)} Loop closure information should be fully exploited not only to ensure global consistency but also to improve local accuracy.

To overcome the above challenge, we propose \textbf{LEMON-Mapping}, a \textbf{L}oop-\textbf{E}nhanced large-scale \textbf{M}ulti-session map merging and \textbf{O}ptimizatio\textbf{N} framework that achieves globally consistent and geometrically accurate point cloud mapping.
Our framework re-examines the ability of loop closure and reasonably enhances its utilization by two aspects.
\changed{First, we introduce an innovative loop recall mechanism that provides a more comprehensive set of geometric constraints and enhanced optimization opportunities for the subsequent spatial BA.}
Second, by leveraging sufficient valid loop closures, our spatial BA can effectively deal with multi-robot maps, whereas traditional BA methods fail to address them. The spatial BA is performed within a local spatial window around the loop, effectively leveraging multiple observations and abundant constraints from different robots. It directly optimizes the poses from different robots equally within the spatial window; thus, the poses of multiple robots matching the same geometric features (planes, lines) are adjusted simultaneously, \changed{reconstructing} a locally accurate and consistent map.
Although spatial BA improves local map accuracy at loop closures, it lacks the capability for global map fusion. To address this problem, we propose a reasonable PGO-based method which effectively combines local BA constraints and odometry constraints to transfer the local alignment achieved by our spatial BA to the whole map, thereby achieving global consistency and accuracy.
We conduct a series of extensive experiments, and the results demonstrate the high mapping accuracy and strong scalability of our multi-session map fusion approach, as shown in Fig.~\ref{fig:headimg}.  

In general, the main contributions of this work can be summarized as follows: 
\begin{itemize}
    \item A scalable multi-session point cloud map merging and optimization system is designed, which integrates two-step PGO with spatial BA to achieve high-precision 3D mapping in large-scale and multi-robot scenarios. 
    \item \changed{A robust loop closure processing pipeline is designed, including outlier rejection and false negative loop recall, which enhances the reliability and completeness of loop constraints used for map fusion.}
    \item \changed{A novel spatial BA that can be used for multi-robot mapping is introduced, which operates on loop-based spatial windows to fully utilize loop constraints and jointly optimize multi-robot poses equally. This improves the local accuracy and reduces the serious map divergence.}
    \item \changed{A pose graph optimization scheme with reasonable sparsification of BA constraints is developed which effectively propagates local accuracy to the global scale, improving both consistency and accuracy.}
\end{itemize}

\section{Related Works}
\begin{figure*}[t]
  \centering
  \includegraphics[width=0.9\linewidth]{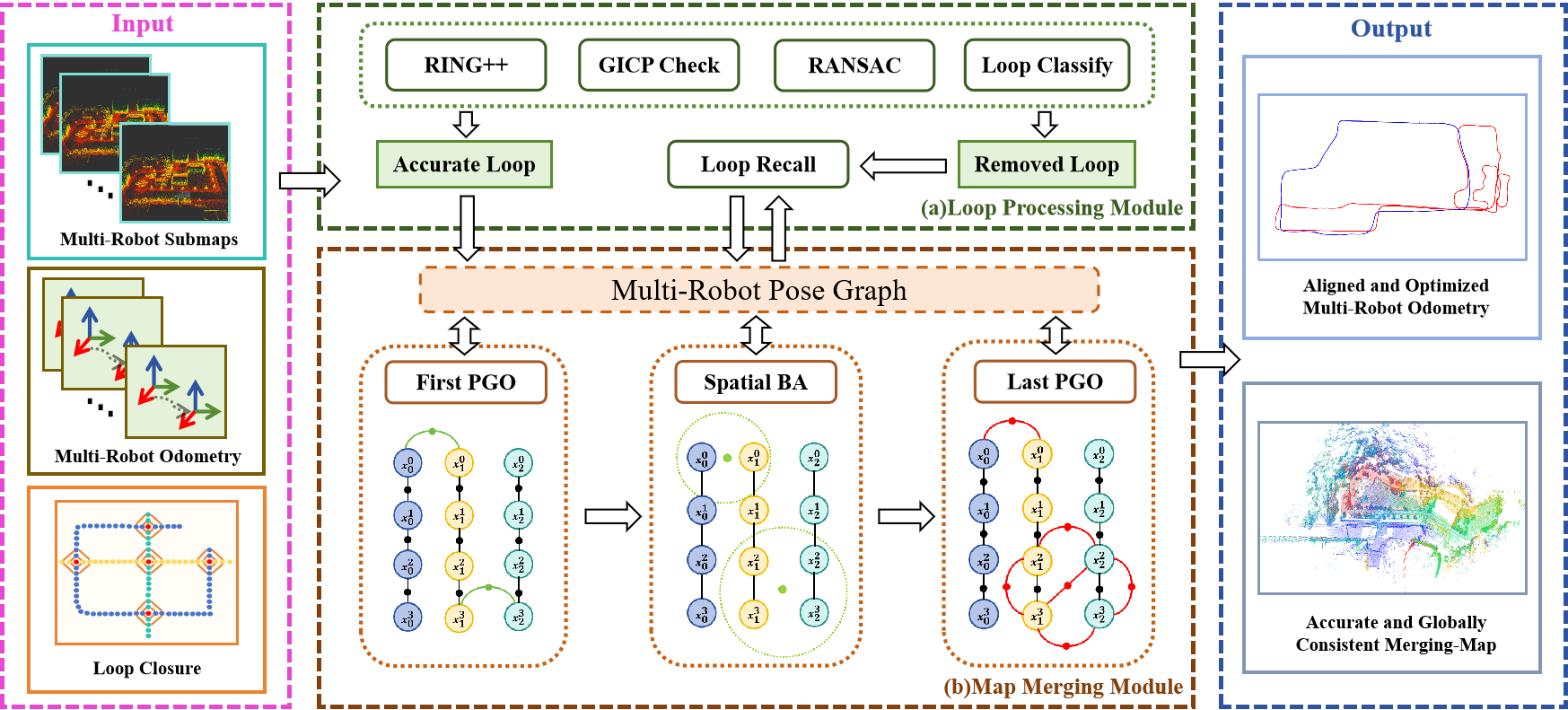}
  \caption{\textbf{The framework of our method}. It takes multi-robot submaps, odometry and loop closures as input, and \changed{reconstructs an} accurate and globally consistent merged map. (a) The Loop Processing Module, which removes outliers and recall false negative loops. (b) The Map Merging Module, which achieves multi-robot map merging through two-step PGO and spatial BA. Each of the three steps in Map Merging Module interacts closely with the multi-robot pose graph. 
  }
  \label{fig:framework}
\end{figure*}

\subsection{Single Map Maintenance and Optimization}

Single map maintenance and optimization have been extensively studied, with existing approaches falling broadly into two categories: PGO-based and BA-based methods. PGO remains a widely adopted back-end in LiDAR SLAM systems \cite{pgslam1, pgslam2, liosam, rised} by utilizing the constraints of loop closures to reduce the accumulated drift of odometry. However, traditional PGO frameworks focus on pose consistency rather than quality of the point cloud map , leading to maps with poor geometric quality.

In contrast, BA-based methods jointly optimize scan poses by minimizing the geometric residuals of matched primitives, offering improved map accuracy. BALM \cite{balm1} introduces a closed-form solution for feature parameters to reduce computational complexity, while BALM2 \cite{balm2} extends this with point clustering and a more efficient second-order solver. \changed{Recently, BALM3 \cite{BALM3.0} employs a majorization-minimization algorithm to decouple scan poses, reducing the time complexity to linear and enabling distributed optimization for large-scale mapping.} HBA \cite{hba} adopts a hierarchical BA strategy followed by top-down PGO, achieving scalable optimization in large-scale scenarios. PSS-BA \cite{pss-ba} introduces quadratic surface modeling for point cloud maps and uses progressive smoothing iterations to optimize map quality. RSO-BA \cite{rsoba} improves robustness via a second-order estimator integrated with a robust kernel function.
Despite their advantages, traditional BA approaches rely on time-based sliding windows and fail to incorporate long-range spatial constraints introduced by loop closures. Consequently, directly deploying these methods in multi-robot scenarios yields limited practicality. 

\subsection{Multi-Session Map Merging}

Multi-session map merging aims to integrate submaps from multiple agents either with or without initial pose estimates, into a unified and globally consistent map. 
\changed{To facilitate this, several life-long mapping systems are proposed. LTA-OM \cite{lta-om} leverages a long-term association mechanism to seamlessly stitch live scans with pre-stored prior maps, eliminating the need for additional map-merging operations.}
\changed{SLIM \cite{slim} tackles the scalability of multi-session mapping by parameterizing dense point clouds into structural lines and planes, combined with pose sparsification to drastically reduce memory footprints.}
Moreover, some multi-robot exploration frameworks such as SMMR-Explore \cite{smmr-explore} and MR-GMMExplore \cite{MR-GMMExplore} address multi-robot exploration and map fusion under communication limits. However, the former is limited to 2D point cloud submaps while the latter suffers from loss of geometric details caused by Gausssian Mixture Models (GMMS) based submaps. Therefore, these multi-robot exploration frameworks are unsuitable for \changed{reconstructing} large-scale, geometrically accurate point cloud maps.
SegMap \cite{segmap} extracts semantic features from 3D point clouds to estimate 6-DoF transformations and utilizes incremental PGO to achieve map fusion, but it heavily depends on accurate semantic segmentation. AutoMerge \cite{automerge} introduces a city-scale merging framework, but its performance degrades in complex environments due to the lack of elimination of loop outliers. LAMM \cite{LAMM} enhances place recognition by removing dynamic objects through M-Detector \cite{mdetector} and using the robust loop detection method BTC \cite{btc}, yet it only merges maps using PGO, which may suffer from severe local divergence.

Outlier rejection constitutes another critical challenge in multi-session mapping. RANSAC \cite{ransac} remains a standard solution for fitting models despite outliers, but its effectiveness diminishes under high outlier ratios and in cases lacking strong priors. \changed{PCM \cite{pcm1, pcm2, pcm3}, adopted in systems such as DCL-SLAM \cite{dclslam} and Disco-SLAM \cite{discoslam}, leverages pairwise geometric consistency for loop validation. Although robust to random outliers, its performance is compromised by accumulated odometry drift in multi-robot scenarios. Additionally, solving the NP-hard maximum clique problem leads to prohibitive computational times when dealing with dense loop candidates in large-scale environments.} GNC \cite{gnc} optimizes a sequence of graduated functions to achieve initialization-free global outlier rejection; however, its accuracy is still constrained by heterogeneous multi-robot odometry drifts.


\section{System Overview}
\label{subsection:system}
\subsection{Problem Formulation} 

Our objective is to \changed{reconstruct} an accurate and consistent map by merging multiple submaps from different robots. In LiDAR-based SLAM systems \cite{loam, dlio, fastlio2, slict}, \changed{each scanned point cloud is inherently registered to its corresponding LiDAR scan pose}, and the global point cloud map is constructed by \changed{registering} each scan into the world coordinate system. Therefore, the quality of the point cloud map is strictly associated with the sensor pose at the time of acquisition. 

In a multi-robot system, the set of pose sequences can be denoted as $S_N = \{s_1, s_2, \dots, s_N\}$, where each sequence $s_k$ corresponds to a separate robot. Different sequences such as $s_i$ and $s_j$, which have different starting points and initial orientations, lack prior information on relative pose transformation. Therefore, it is essential to estimate the relative transformations between these trajectories and optimize the poses using constraints in overlapping regions to improve geometric consistency.

The problem can be formulated as finding a set of optimized pose sequences $S_N^* = \{s_1^*, s_2^*,...,s_N^*\}$, such that all trajectories are aligned to a common coordinate system (typically $s_1^*$), and the \changed{subsequent} multi-robot map exhibits minimal divergence while maintaining global consistency.

\subsection{Framework of LEMON-Mapping}

Fig.~\ref{fig:framework} illustrates the overall framework of LEMON-Mapping. Our system consists of two main components: the Loop Processing Module detailed in Section \ref{subsection:loopprocessing} and the Map Merging Module described in Section~\ref{subsection:loopBA} and Section~\ref{subsection:mapmerging}. 

The Loop Processing Module ingests multi-agent odometry, submaps, and raw loop closure candidates (comprising both intra-robot and inter-robot loops), robustly filtering out erroneous constraints to output correct loop closures. The Map Merging Module then uses the loops to optimize the multi-robot trajectories through three steps: spatial bundle adjustment (BA) and two pose graph optimization (PGO) steps. 
As the two PGO steps share a similar formulation, they are jointly described in Section \ref{subsection:mapmerging}, while the proposed spatial BA is discussed separately in Section \ref{subsection:loopBA}.

\section{Loop Processing Module}
\label{subsection:loopprocessing}
The Loop Processing Module is a fundamental component that supports the spatial BA and the two-step pose graph optimization. It consists of three submodules: Outlier Rejection, Loop Classification, and Loop Recall. This module processes both self-loops and inter-loops from multiple robots, which are initially detected by the robust loop detection method RING++ \cite{ring++}.

\begin{figure}[!t]
  \centering
  \includegraphics[width=1.02\linewidth]{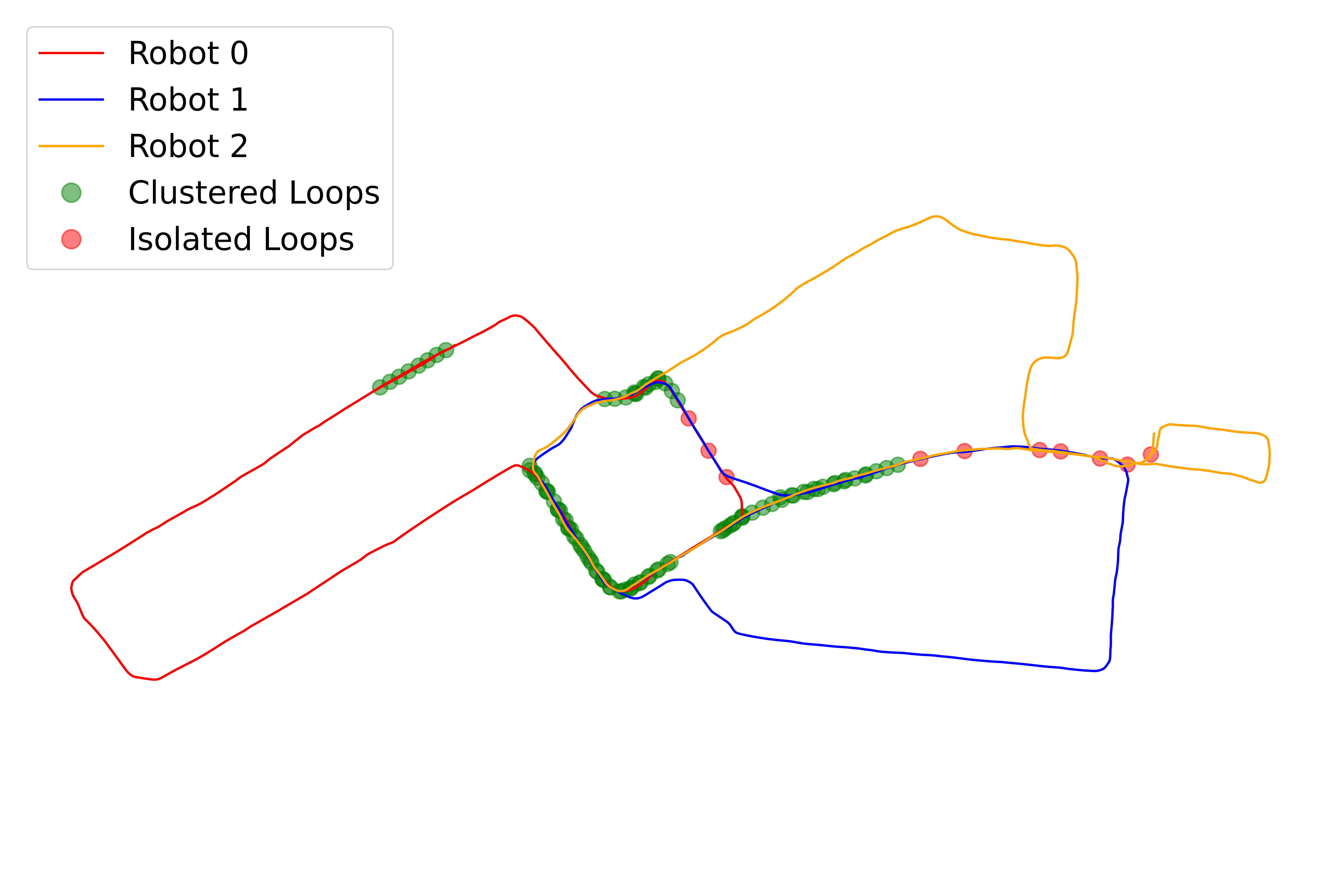}
  \vspace{-25pt}
  \caption{An example of loop classification in Campus\_1 dataset, the multi-robot trajectories and the two types of loops are shown as labeled.}
  \label{fig:loopclassify}
\end{figure}

\subsection{Outlier Rejection}

Loop closure data may contain incorrect constraints, which seriously affects the accuracy of PGO. To solve this problem, we design an outlier rejection method using the Generalized Iterative Closest Point (GICP) \cite{gicp} combined with Random Sample Consensus (RANSAC) \cite{ransac}.

The process begins with Statistical Outlier Removal (SOR) \cite{sor}, which filters the noise from the raw point clouds by analyzing the statistical distribution of point-to-neighbor distances. GICP is then employed to align the filtered point clouds, initialized using the transformation estimated by RING++. To validate the alignment, correspondences between the two point clouds are refined using a RANSAC-based rejection method, which discards outliers inconsistent with rigid transformations. The loop is accepted only if the number of inlier correspondences exceeds a predefined threshold. Additionally, the GICP fitness score is used as a quantitative metric for alignment quality, and alignments with poor scores or non-convergence are discarded. This two-stage filtering approach enhances the reliability of loop closures by ensuring that only geometrically valid alignments are retained.

\subsection{Loop Classification} 

Following the outlier rejection process, the remaining valid loop closures are classified to facilitate efficient spatial BA. Specifically, we categorize loops into two types based on their spatial distribution: clustered loops and isolated loops.

We implement a region-growing algorithm based on Breadth-First Search (BFS). Starting from the spatial center of each loop, the algorithm incrementally searches for nearby loops within a predefined radius. If neighboring loops are found, the search expands from their centers, continuing recursively until no additional nearby loops are detected. Loops that form such spatial clusters are labeled as clustered loops, while those without adjacent loops are marked as isolated loops. An example of this classification process is shown in Fig.~\ref{fig:loopclassify}.

\subsection{Loop Recall} 

Since the outlier rejection step adopts strict criteria to ensure robustness and accuracy, it may inadvertently discard valid loop closures. These missing constraints between robot trajectories hinder the correction of divergence in overlapping areas, potentially degrading the consistency of the fused multi-robot map. To address this, we propose a loop recall mechanism to recover previously discarded valid loops. After the first PGO (Section~\ref{subsection:mapmerging}) aligns all robot trajectories into the same coordinate system using valid loops, the updated poses are used to re-evaluate previously rejected loops. If the Euclidean distance between the associated poses is below a threshold (2m in our system), the loop is recalled. This lightweight and distance-based strategy effectively recovers useful constraints and enhances subsequent optimization. 

\section{Spatial Bundle Adjustment}
\label{subsection:loopBA}
Traditional BA methods \cite{balm1,balm2,hba} jointly optimize sequential poses within a sliding window of temporally ordered data. However, they struggle with scenarios involving revisiting the same location over long time spans or from different agents, making them unsuitable for multi-robot systems. Different from them, our spatial BA jointly optimizes poses from different robots in local spatial windows simultaneously. This design reduces map divergence across sessions, as demonstrated in Fig.~\ref{fig:clusterBA} and Fig.~\ref{fig:compare_pcd}. Our BA specifically focuses on loop closure regions for two reasons: (1) these regions often exhibit significant spatial overlap among submaps and suffer from high divergence; (2) they contain multiple observations of the same geometric structure from different times or agents, offering rich constraints for accurate joint optimization.
 
The spatial BA is performed on all loops after the FPGO and the loop recall step. All available loops are divided into two categories based on the principle of loop classification in Section~\ref{subsection:loopprocessing}. The poses of all robots are built into a kd-tree using their spatial positions for faster searching. For each loop closure, we define a spherical spatial window centered at the midpoint between the two poses involved. We employ a radius-based search using the pose kd-tree to efficiently retrieve the poses in the spherical region. These selected poses form the local optimization window for the spatial BA, ensuring that only the spatially correlated data around the loop are refined. Depending on the type of loop closure classified in Section~\ref{subsection:loopprocessing}, we propose two forms of spatial BA designed for their respective characteristics.
For isolated loops, we apply our developed spatial diffusion bundle adjustment (DBA). For clustered loops, a spatial variant of hierarchical bundle adjustment (HBA) is proposed.

\begin{figure}[!t]
  \centering
  \includegraphics[width=1.0\linewidth]{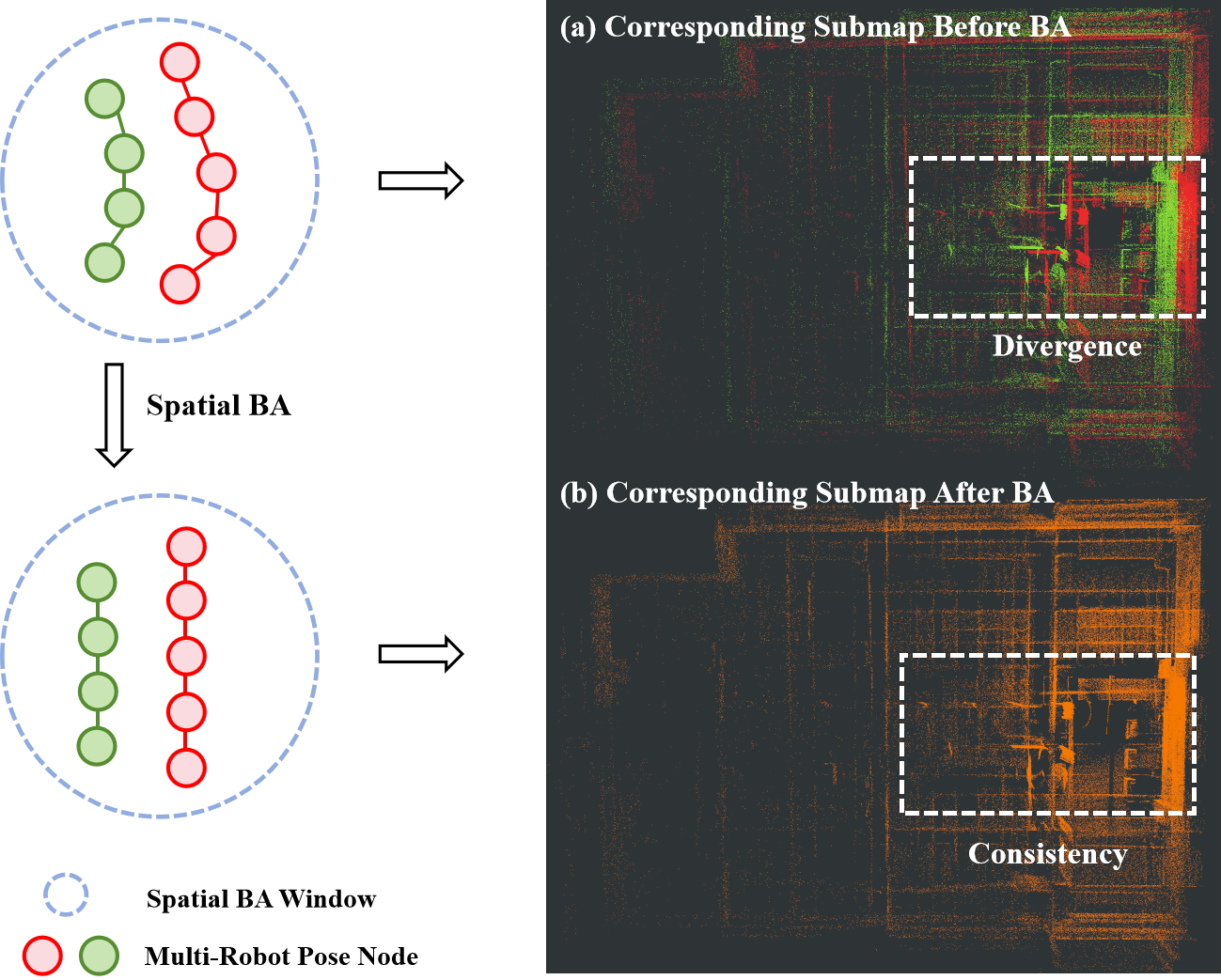}
  \caption{\changed{The red and green nodes in the left part show the trajectories of different robots at a certain loop. (a) and (b) show the submaps before and after BA optimization respectively. Our spatial BA significantly reduces divergence and reconstructs a locally consistent map.}}
  \label{fig:clusterBA}
\end{figure}

\subsection{DBA for Isolated Loops} 

Our DBA is built upon BALM2 \cite{balm2}. However, it relies on accurate initial plane estimation, which may fail to converge reliably when the same plane observed by different robots appears misaligned due to map divergence. In such cases, the same structure may be voxelized as separate multi-layer planes (shown in Fig.~\ref{fig:clusterBA} (a)), making direct optimization \changed{difficult}. To address this, we partition the poses within the spatial window by robot into distinct clusters, each with its own point cloud, as illustrated in the left part of Fig.\ref{fig:clusterBA}. We then apply GICP \cite{gicp} to roughly align these clusters, reducing plane stratification and enabling reliable plane estimation for subsequent optimization.
However, GICP only provides a coarse alignment of multi-robot submaps rather than true fine fusion. It simply adjusts the relative transformation of the assembled point cloud submaps of different pose sequence, but it does not optimize multi-robot poses at the frame level or refine point cloud at the single scan level.
After the initial pose graph optimization incorporating an isolated loop, the poses near the loop center are effectively aligned and optimized. However, as the distance from the loop center increases, the influence of loop constraints in FPGO gradually weakens, leading to less accurate optimization and alignment of distant poses.
Therefore, for an isolated loop, the key point is to figure out how large its influence range is and align the surrounding affected poses. We develop DBA to tackle this issue.

We begin with a set of LiDAR poses confined to a small range near an isolated loop. We then gradually expand the set by diffusely incorporating additional LiDAR poses from increasingly wider ranges. The key principle is that, within the loop influence region, poses located further from the loop center contribute less accurately to the representation of the loop's features. The details of DBA are as follows. 

For simplicity, we use the following notation, and we refer the reader to \cite{balm2} for more detailed information. In this paper, feature is seen as \textit{point clusters}, the point cluster for the $i$-th feature is denoted by set \( \bm{\mathcal{C_i}} = \{\mathbf{p}_{ijk} \in \mathbb{R}^3 | j=1,...,M_p, k=1,..., N_{ij}\}\), where $M_p$ is the number of poses, and the corresponding \textit{point cluster coordinate} $\boldsymbol{\Re}\!\left( \boldsymbol{\mathcal{C}} \right)$ is defined as:
\begin{equation}
\begin{gathered}
\boldsymbol{\Re}\!\left( \boldsymbol{\mathcal{C}_i} \right) \triangleq 
\sum_{j=1}^{M_p}\sum_{k=1}^{N_{ij}}
\begin{bmatrix}
    \mathbf p_{ijk} \\ 1
\end{bmatrix}
\begin{bmatrix}
    \mathbf p_{ijk}^T & 1
\end{bmatrix}
=
\begin{bmatrix}
    \mathbf P_i & \mathbf v_i \\
    \mathbf {v_i}^T & N_i
\end{bmatrix} \in \mathbb{S}^{4 \times 4}, \\
\mathbf P_i = \sum_{j=1}^{M_p}\sum_{k=1}^{N_{ij}} \mathbf p_{ijk} \mathbf p_{ijk}^T,
\quad 
\mathbf v_i = \sum_{j=1}^{M_p}\sum_{k=1}^{N_{ij}} \mathbf p_{ijk}.
\end{gathered}
\label{def-pc}
\end{equation}

A BA formulation with determining LiDAR poses $\mathbf T = (\mathbf T_1, \cdots, \mathbf T_{M_p})$, and feature parameters $\boldsymbol{\pi} = (\boldsymbol{\pi}_1, \cdots, \boldsymbol{\pi}_{M_f})$, where $M_f$ is the number of features, starts with the optimization problem:
\begin{align}
    \min_{\mathbf T, \boldsymbol{\pi}} 
	\Big(\sum\nolimits_{i=1}^{M_f} c (\boldsymbol{\pi}_i, \mathbf T)
	\Big). \label{BA-formulation}
\end{align}

\changed{When using plane features, each cost item denotes the squared point-to-plane Euclidean distance, which has been proven to take the form of (BALM2 \cite{balm2})}
\begin{equation}
\begin{gathered}
c_i(\mathbf T) \triangleq c(\pi_i, \mathbf T)  = \lambda_3 \left(\mathbf A \left(\sum_{j=1}^{M_p} \mathbf T_j \mathbf C_{f_{ij}} \mathbf T_j^T \right) \right), \\
\mathbf A (\mathbf C_i ) \triangleq \frac{1}{N_i} \mathbf P_i - \frac{1}{N_i^2} \mathbf v_i \mathbf v_i^T,\quad 
\mathbf C_i = \begin{bmatrix}
    \mathbf P_i & \mathbf v_i \\
    \mathbf v_i^T & N_i
\end{bmatrix} \in \mathbb{S}^{4 \times 4} ,
\end{gathered}
\label{cost_item}
\end{equation}
with $\lambda_3(\mathbf A)$ as the 3rd largest eigen value of matrix function $\mathbf A$, $\mathbf C_{f_{ij}} \in \mathbb{R}^{4 \times 4}$ being a pre-computed matrix where 
\begin{equation}
\begin{gathered}
\mathbf C_{f_{ij}} = 
\begin{bmatrix}
    \mathbf P_{f_{ij}} & \mathbf v_{f_{ij}} \\
    \mathbf v_{f_{ij}}^T & N_{ij}
\end{bmatrix}, \\
\mathbf P_{f_{ij}} = \sum_{k=1}^{N_{ij}} \mathbf p_{f_{ijk}} \mathbf p_{f_{ijk}}^T, \quad
\mathbf v_{f_{ij}} = \sum_{k=1}^{N_{ij}} \mathbf p_{f_{ijk}}.
\end{gathered}
\label{feature-pc}
\end{equation}

Leveraging the above definition, where \textit{point clusters} form plane features, we could solve the problem with an optimal update $\Delta \mathbf T^{\star}$ using the Levenberg-Marquardt (LM) algorithm:
\begin{align}\label{linear_equ}
    \Delta \mathbf T^{\star} = - \left( \mathbf H + \mu \mathbf I \right)^{-1} \mathbf J^T,
\end{align}

Here $\mu$ is the damping parameter, while $\mathbf{J}$ and $\mathbf{H}$ are the Jacobian and Hessian of the cost function, respectively.

In DBA, we categorize the poses into $D$ groups based on their distances from the loop. Each group contains a set of poses. The number of poses in the $d_i$-th group is denoted as $^{d_i}M_p$ (for $i=0,\dots,D$). In the following part, we first demonstrate that in this \textit{incremental} form of BA, the optimal update formulation \changed{provides a good approximation to} performing BA jointly on all features. We then prove that the algorithmic complexity is smaller compared to the traditional BA methods, and our approach achieves a relatively higher confidence level which should perform better in practice.

In the $d_i$-th diffusion process, we treat all poses that participated in the previous $i$ diffusion process as accurate and freeze their gradients, meaning they are not optimized any further. Consequently, when optimizing the $d_i$-th process, we can categorize the poses into two groups $^0\mathbf T$ and $^1\mathbf T$ with numbers of $M_{p_0} \triangleq \sum_{k=0}^{i-1} {^{d_k}M_p}$ and $M_{p_1} \triangleq {^{d_i}M_p} $.

\begin{assumption} \label{assumption:confidence}
    Small range LiDAR poses $M_{p_0}$ that participate in the BA have significantly smaller measurement noise covariance $\boldsymbol{\Sigma}_{c_{f_{ij}}}$ than that associated with the outer pose $M_{p_1}$. 
\end{assumption}



According to the definition of matrix $\mathbf A$, it follows that $\mathbf A (\mathbf C ) \in \mathbb{S}^{3 \times 3}$. Hence, based on \changed{the differential assumption for cost function in~\cite{BALM3.0}}, the LM Jacobian and Hessian for DBA can be partitioned conformably as:

\[
\mathbf H = \begin{bmatrix} \mathbf H_{00} & \mathbf H_{01} \\[4pt] \mathbf H_{10} & \mathbf H_{11} \end{bmatrix}, \quad \mathbf{J} = \begin{bmatrix} \mathbf J_0 \\ \mathbf J_1 \end{bmatrix} .
\]

Using LM optimization in Eq. \ref{linear_equ}, the pose updates for group $M_{p_1}$ in the joint BA and DBA can be respectively derived as:
\begin{equation}
    \begin{aligned}
        & \Delta^1\mathbf{T}_{\text{joint}} = -\mathbf{S}^{-1} (\mathbf{J}_1 - \mathbf{H}_{10} (\mathbf{H}_{00} + \mu\mathbf{I}_0)^{-1} \mathbf{J}_0), \\
        &
        \Delta^1\mathbf{T}_{\text{DBA}} = (- (\mathbf{H}_{11} + \mu\mathbf{I}_1)^{-1} \mathbf{J}_1) ,
    \end{aligned}
    \label{eq:update}
\end{equation}
where $\mathbf{S} = \mathbf{H}_{11} + \mu\mathbf{I}_1 - \mathbf{H}_{10} (\mathbf{H}_{00} + \mu\mathbf{I}_0)^{-1} \mathbf{H}_{01}$ is the Schur's complement. 

The joint optimization can be viewed as an update method that refines the coupled terms $\mathbf H_{10}=\mathbf H_{01}^T$ for the DBA method. Therefore, from Eq. \ref{eq:update}, two refinement rates for the Hessian ($r_{\mathbf H}$) and Jacobian ($r_{\mathbf J}$) are defined as the ratio of the coupling contribution to the main term:
\begin{equation}
    \begin{aligned}
    & r_{\mathbf H}  =\frac{\| \mathbf{H}_{10} (\mathbf{H}_{00} + \mu\mathbf{I}_0)^{-1} \mathbf{H}_{01} \|}{\| \mathbf{H}_{11} + \mu\mathbf{I}_1 \|},\\
    & r_{\mathbf J} = \frac{\| \mathbf{H}_{10} (\mathbf{H}_{00} + \mu\mathbf{I}_0)^{-1} \mathbf{J}_0 \|}{\| \mathbf{J}_1 \|}.
    \end{aligned}
    \label{eq:update}
\end{equation}

\changed{The smaller these two refinement rates are, the less significant the omitted
coupling terms become relative to the active-block terms. Under
Assumption~\ref{assumption:confidence}, the inner poses are treated as
high-confidence anchors since they have been optimized in previous diffusion
steps and are supported by lower-noise measurements. In a local
noise-weighted least-squares interpretation, this corresponds to a stronger
inner information block \(\mathbf H_{00}\), making
\((\mathbf H_{00}+\mu\mathbf I_0)^{-1}\) relatively small when the inner block
is locally non-degenerate. As a result, the Schur correction terms
\(\mathbf H_{10}(\mathbf H_{00}+\mu\mathbf I_0)^{-1}\mathbf H_{01}\) and
\(\mathbf H_{10}(\mathbf H_{00}+\mu\mathbf I_0)^{-1}\mathbf J_0\) are suppressed.
Therefore, DBA can be interpreted as an active-block approximation of joint BA:
when \(r_{\mathbf H}\) and \(r_{\mathbf J}\) are small, the DBA update closely
matches the active-block update of the full joint optimization while avoiding
the cost of optimizing all poses jointly. In practice, this approximation is further guaranteed when selecting small cross-coupling between the frozen inner loop and active outer loop.}

\begin{lemma}[Computational Complexity Reduction of DBA]
\label{lemma:compute_DBA}
Let the total number of poses be \(M=\sum_{i=0}^{D-1} m_i\), where 
\(m_i \triangleq {^{d_i}M_p}\) denotes the number of poses in the \(i\)-th 
diffusion group, for simplicity. A joint BA optimization over all poses has complexity
\[
O\!\left(M_f M + M_f M^2 + M^3\right).
\]
In contrast, DBA performs BA incrementally and its total complexity is
\[
O\!\left(M_f\sum_i m_i \;+\; M_f\sum_i m_i^2 \;+\; \sum_i m_i^3 \right),
\]
which is strictly no larger and generally substantially smaller. In particular, 
when the diffusion groups are of comparable size, i.e., \(m_i \approx M/D\), the 
quadratic and cubic terms are reduced by factors of approximately \(D\) and 
\(D^2\), respectively.
\end{lemma}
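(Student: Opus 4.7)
The plan is to establish the per-iteration cost of one Levenberg--Marquardt step (Eq.~\ref{linear_equ}) as a function of the number $N$ of free pose variables, and then sum across diffusion rounds. I would decompose the cost of a single LM step into three additive contributions: (i) evaluating the cost items $c_i(\mathbf{T})$ and the per-feature aggregates of Eq.~\ref{feature-pc}, (ii) assembling the Jacobian $\mathbf{J}$ and Hessian $\mathbf{H}$ blocks, and (iii) solving the damped normal equation.

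First I would argue that residual evaluation over $M_f$ plane features touches each of the $N$ active poses a constant number of times per feature, giving $O(M_f N)$; each feature's $3\times3$ eigenvalue extraction (for $\lambda_3$) is $O(1)$. For the Hessian, each feature couples the poses observing it in a quadratic manner, so filling in the off-diagonal blocks contributes $O(M_f N^2)$ in the worst case. Finally, solving the $(6N)\!\times\!(6N)$ damped system $(\mathbf{H}+\mu\mathbf{I})^{-1}\mathbf{J}^T$ by Cholesky (or LU) factorization costs $O(N^3)$. Setting $N=M$ reproduces the joint-BA complexity $O(M_f M + M_f M^2 + M^3)$.

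Next I would turn to DBA. At the $i$-th diffusion round only the $m_i$ poses in the current group are free variables, because the poses from previous rounds have their gradients frozen in the partitioning introduced above Assumption~\ref{assumption:confidence}. Hence the active block of the Hessian is of size $(6m_i)\!\times\!(6m_i)$, the active Jacobian has length $6m_i$, and the reduced linear system is $O(m_i^3)$. Applying the same per-step accounting as in the joint case gives, at round $i$, a cost of $O(M_f m_i + M_f m_i^2 + m_i^3)$. Summing over $i=0,\dots,D-1$ yields the claimed total bound. The inequality $\sum_i m_i^k \le \bigl(\sum_i m_i\bigr)^k$ for $k\ge 1$ with $m_i\ge 0$ then immediately gives the ``strictly no larger'' statement, and substituting the balanced case $m_i\approx M/D$ into $\sum_i m_i^2$ and $\sum_i m_i^3$ produces the reduction factors $D$ and $D^2$, respectively.

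The main obstacle I anticipate is handling the frozen poses in the cost-evaluation term honestly: the point-cluster aggregate $\mathbf{C}_{f_{ij}}$ in Eq.~\ref{feature-pc} still accumulates contributions from previously optimized poses, so strictly speaking one must pay for traversing them once per feature per round. I would handle this by precomputing the frozen portion of each feature aggregate incrementally, updating it only when a new group is unfrozen; the amortized cost of this bookkeeping across all rounds is $O(M_f \sum_i m_i) = O(M_f M)$, which is absorbed into the stated linear term. With that accounting in place, the quadratic and cubic terms rigorously depend only on the active block size $m_i$ at each round, yielding the advertised asymptotic gains without hiding work elsewhere.
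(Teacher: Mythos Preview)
Your proposal is correct and follows essentially the same route as the paper: decompose the per-LM-step cost into the three terms (residual evaluation, Hessian assembly, linear solve), instantiate with $N=M$ for joint BA and $N=m_i$ per diffusion round for DBA, sum, and invoke $\sum_i m_i^k \le (\sum_i m_i)^k$ together with the balanced-case substitution. Your explicit justification of each complexity term and your treatment of the frozen-pose aggregate bookkeeping are in fact more careful than the paper's proof, which simply asserts the three-term decomposition and then expands $M^2,M^3$ to exhibit the cross-group terms that DBA eliminates.
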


\begin{proof}
    See Appendix (\ref{appendix:lemma2}). 
\end{proof}

This proof uses a maximum of $D$ diffusion steps, while in practice, DBA is used when having a strong confidence of inner group poses for a small number of times far below $D$.

Let us estimate the confidence level of the estimated pose using the covariance estimation. Denote $\mathbf C_{f} = \{ \mathbf C_{f_{ij}} \}$, $\delta \mathbf C_{f} = \{ \delta \mathbf C_{f_{ij}} \}$, and $\mathbf T^{\star}$ as the converged solution using the measured cluster $\mathbf C_{f}$, according to BALM2, we get:
\begin{align}
    \delta \mathbf T^{\star} &= \mathbf H^{-1}  \frac{\partial \mathbf J^T  \left(\mathbf T^{\star}, \mathbf C_{f} \right)}{\partial \mathbf C_{f}}   \delta \mathbf C_{f} \sim \mathcal{N} \left(\mathbf 0, \boldsymbol{\Sigma}_{\delta \mathbf T^{\star}} \right),  \label{noise-TtoC}\\
    \boldsymbol{\Sigma}_{\delta \mathbf T^{\star}} &= \mathbf H^{-1}  \frac{\partial \mathbf J^T  \left(\mathbf T^{\star}, \mathbf C_{f} \right)}{\partial \mathbf C_{f}}  \boldsymbol{\Sigma}_{\delta \mathbf C_f}  \frac{\mathbf J  \left(\mathbf T^{\star}, \mathbf C_{f} \right)}{\partial \mathbf C_{f}}  \mathbf H^{-T} \notag \\
    &= \mathbf H^{-1} \left(\sum_{i=1}^{M_f} \sum_{j=1}^{M_p} \mathbf L_{ij} \boldsymbol{\Sigma}_{c_{f_{ij}}} \mathbf L_{ij}^T\right) \mathbf H^{-T}.\label{pose_convariance}
\end{align}

And We can obtain 
\begin{align}
    \boldsymbol{\Sigma}_{c_{f_{ij}}} &= \sum_{k=1}^{N_{ij}} \mathbf B_{f_{ijk}} \boldsymbol{\Sigma}_{p_{f_{ijk}}} \mathbf B_{f_{ijk}}^T \succeq 0, \quad \mathbf B_{f_{ijk}} \in  \mathbb{R}^{9 \times 3}.\\
    \frac{\partial \mathbf J^T}{\partial c_{f_{ij}}} &= \begin{bmatrix}
	\vdots \\
        \frac{\partial \left( \mathbf J^p \right)^T}{\partial c_{f_{ij}}} \\
        \vdots
    \end{bmatrix} = \begin{bmatrix}
	\vdots \\
        \mathbf L^p_{ij} \\
        \vdots
    \end{bmatrix} \triangleq \mathbf L_{ij} \in \mathbb{R}^{6M_p \times 9} .
\end{align}

\begin{lemma}[Covariance ordering: DBA vs joint BA]
\label{lemma:covariance}
Similar to the partition above, let the Jacobian-derivative blocks w.r.t.\ feature clusters be
\[
\mathbf L_{ij} = \begin{bmatrix} {^0\mathbf L_{ij}} \\[3pt] {^1\mathbf L_{ij}} \end{bmatrix},
\qquad
\boldsymbol{\Sigma}_{c_{f_{ij}}}\succeq 0.
\]
Define the joint-pose covariance perturbation due to cluster noise as
\[
\boldsymbol{\Sigma}^{\mathrm{joint}}_{\delta\mathbf T^\star}
= \mathbf H^{-1}\Big(\sum_{i=1}^{M_f}\sum_{j=1}^{M_p}\mathbf L_{ij}\,\boldsymbol{\Sigma}_{c_{f_{ij}}}\,\mathbf L_{ij}^\top\Big)\mathbf H^{-T},
\]
and let $\boldsymbol{\Sigma}^{\mathrm{joint}}_{11}$ be its bottom-right block (covariance of block~1 under joint BA).  
For DBA (block~0 frozen, optimizing only block~1) define
\[
\boldsymbol{\Sigma}^{\mathrm{DBA}}_{1}
= \mathbf H_{11}^{-1}\Big(\sum_{i=1}^{M_f}\sum_{j\in\text{group1}} {^1\mathbf L_{ij}}\,\boldsymbol{\Sigma}_{c_{f_{ij}}}\,{^1\mathbf L_{ij}}^\top\Big)\mathbf H_{11}^{-T}.
\]
Then, under the usual invertibility assumptions on $\mathbf H$ and $\mathbf H_{00}$, the following PSD ordering holds:
\[
\boldsymbol{\Sigma}^{\mathrm{DBA}}_{1} \preceq \boldsymbol{\Sigma}^{\mathrm{joint}}_{11}.
\]
Consequently, freezing well-measured inner poses (block~0) yields a covariance for the active poses that is no larger (and typically smaller) than the covariance those same poses would have under a full joint BA.
Note that the inequality follows without assuming inner poses are better measured.
\end{lemma}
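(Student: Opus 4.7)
The plan is to treat the lemma as a structural block-matrix identity followed by a standard Schur-complement positive-semidefinite ordering. At the highest level, $\boldsymbol{\Sigma}^{\mathrm{DBA}}_{1}$ is the block-$1$ covariance of an estimator that conditions on block $0$ (treating it as known), while $\boldsymbol{\Sigma}^{\mathrm{joint}}_{11}$ is the block-$1$ marginal covariance of an estimator that jointly recovers both blocks. The ordering is therefore morally the familiar ``conditioning reduces variance'' fact, and the proof amounts to realising this via Schur complements on the Hessian $\mathbf H$ and on the information/noise matrix $\mathbf N \triangleq \sum_{i,j}\mathbf L_{ij}\boldsymbol{\Sigma}_{c_{f_{ij}}}\mathbf L_{ij}^\top$.

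First I would partition every object conformably with $(\mathbf T_0,\mathbf T_1)$. Using the standard block-inverse formula, the bottom block-row of $\mathbf H^{-1}$ is $\bigl[-\mathbf S^{-1}\mathbf H_{10}\mathbf H_{00}^{-1},\;\mathbf S^{-1}\bigr]$ with $\mathbf S=\mathbf H_{11}-\mathbf H_{10}\mathbf H_{00}^{-1}\mathbf H_{01}$. Writing $\mathbf L_{ij}=\bigl[{^0\mathbf L_{ij}}^\top,\,{^1\mathbf L_{ij}}^\top\bigr]^\top$ and multiplying out Eq.~\eqref{pose_convariance}, I would collect the joint block-$(1,1)$ covariance into the closed form
\begin{equation}
\boldsymbol{\Sigma}^{\mathrm{joint}}_{11}
= \mathbf S^{-1}\mathbf Q\,\mathbf S^{-\top},
\quad
\mathbf Q=\sum_{i,j}\mathbf K_{ij}\,\boldsymbol{\Sigma}_{c_{f_{ij}}}\mathbf K_{ij}^\top,
\quad
\mathbf K_{ij}\triangleq {^1\mathbf L_{ij}}-\mathbf H_{10}\mathbf H_{00}^{-1}\,{^0\mathbf L_{ij}} .
\end{equation}
The DBA expression is already $\boldsymbol{\Sigma}^{\mathrm{DBA}}_{1} = \mathbf H_{11}^{-1}\mathbf N_1 \mathbf H_{11}^{-\top}$ with $\mathbf N_1=\sum_{i,\,j\in\text{group1}}{^1\mathbf L_{ij}}\boldsymbol{\Sigma}_{c_{f_{ij}}}{^1\mathbf L_{ij}}^\top$, so the lemma reduces to a single PSD inequality between two sandwich products.

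Next, I would split the proof into two PSD steps. The ``outer'' step uses $\mathbf S \preceq \mathbf H_{11}$ (immediate because $\mathbf H_{10}\mathbf H_{00}^{-1}\mathbf H_{01}\succeq 0$), whence $\mathbf S^{-1}\succeq \mathbf H_{11}^{-1}$ in the L\"owner order. The ``inner'' step shows $\mathbf Q \succeq \mathbf N_1$: for every feature $i$ and pose $j\in\text{group1}$ we have ${^0\mathbf L_{ij}}=0$, so $\mathbf K_{ij}={^1\mathbf L_{ij}}$ and the corresponding summand in $\mathbf Q$ coincides with the one in $\mathbf N_1$; the remaining summands of $\mathbf Q$ (those with $j\in\text{group0}$) are of the form $\mathbf K_{ij}\boldsymbol{\Sigma}_{c_{f_{ij}}}\mathbf K_{ij}^\top\succeq 0$ since $\boldsymbol{\Sigma}_{c_{f_{ij}}}\succeq 0$. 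Combining the two with the congruence-preservation of the PSD cone under the common left/right factors (using also that $\mathbf H_{11}^{-1}$ and $\mathbf S^{-1}$ are symmetric positive definite) yields the claim. A cleaner variant packages both steps at once by noting that $\mathbf H^{-1}\mathbf N\mathbf H^{-\top} - \operatorname{diag}(0,\mathbf H_{11}^{-1}\mathbf N_1 \mathbf H_{11}^{-\top})$ is PSD via a Schur test on the resulting $2\times 2$ block matrix.

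The main obstacle I expect is the ``inner'' step: it is tempting but incorrect to compare $\mathbf Q$ and $\mathbf N_1$ term by term without first verifying that features observed by both groups produce a cross-contribution that cannot undo the positivity. The rescue is exactly the vanishing of ${^0\mathbf L_{ij}}$ whenever $j\in\text{group1}$, which decouples the two kinds of summands and lets the remaining group-$0$ summands only \emph{add} to $\mathbf Q$. Once this bookkeeping is in place, the Schur inequality $\mathbf S^{-1}\succeq \mathbf H_{11}^{-1}$ supplies the rest, and the final bound $\boldsymbol{\Sigma}^{\mathrm{DBA}}_{1}\preceq \boldsymbol{\Sigma}^{\mathrm{joint}}_{11}$ follows without invoking Assumption~\ref{assumption:confidence}, matching the remark at the end of the statement.
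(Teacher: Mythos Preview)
Your two-ingredient plan---the Schur ordering $\mathbf S^{-1}\succeq\mathbf H_{11}^{-1}$ plus a comparison of noise blocks---is exactly the scaffold the paper uses, and your closed form $\boldsymbol{\Sigma}^{\mathrm{joint}}_{11}=\mathbf S^{-1}\mathbf Q\,\mathbf S^{-\top}$ with $\mathbf K_{ij}={^1\mathbf L_{ij}}-\mathbf H_{10}\mathbf H_{00}^{-1}\,{^0\mathbf L_{ij}}$ is in fact more careful than the paper, which simply asserts $\boldsymbol{\Sigma}^{\mathrm{joint}}_{11}=\mathbf S^{-1}\bigl[\sum\mathbf L\Sigma\mathbf L^\top\bigr]_{11}\mathbf S^{-\top}$ (dropping the cross terms) and then compares the $(1,1)$ noise block with $\mathbf N_1$ directly.

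That said, two steps in your chain do not go through as written. First, the ``rescue'' ${^0\mathbf L_{ij}}=0$ for $j\in\text{group }1$ is not implied by the BALM2 cost: since $c_i=\lambda_3\bigl(\mathbf A(\sum_k\mathbf T_k\mathbf C_{f_{ik}}\mathbf T_k^\top)\bigr)$, perturbing $\mathbf C_{f_{ij}}$ changes the aggregate matrix and hence the eigenpair that enters $\mathbf J^{p}$ for \emph{every} pose $p$, so $\mathbf L_{ij}^{p}=\partial(\mathbf J^{p})^\top/\partial c_{f_{ij}}$ is generically nonzero even when $p\neq j$. Without that vanishing, $\mathbf K_{ij}\neq{^1\mathbf L_{ij}}$ on group~$1$ and your inner inequality $\mathbf Q\succeq\mathbf N_1$ is unproved. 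Second, even granting both $\mathbf Q\succeq\mathbf N_1$ and $\mathbf S^{-1}\succeq\mathbf H_{11}^{-1}$, the conclusion $\mathbf S^{-1}\mathbf Q\mathbf S^{-\top}\succeq\mathbf H_{11}^{-1}\mathbf N_1\mathbf H_{11}^{-\top}$ is \emph{not} ``congruence-preservation'': from $A\succeq B\succ 0$ and $N\succeq 0$ one cannot deduce $ANA\succeq BNB$ (take $A=\mathrm{diag}(2,1)$, $B=\mathbf I$, $N=\mathbf 1\mathbf 1^\top$; then $ANA-BNB$ has determinant $-1$). The paper's combining step makes the same leap, so this second gap is shared rather than a divergence of method; but as written, neither your chain nor the paper's is complete at that point.
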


\begin{proof}
    See Appendix (\ref{appendix:lemma3}).
\end{proof}

\subsection{HBA for Clustered Loops} 

\begin{figure}[!t]
  \centering
  \includegraphics[width=0.95\linewidth]{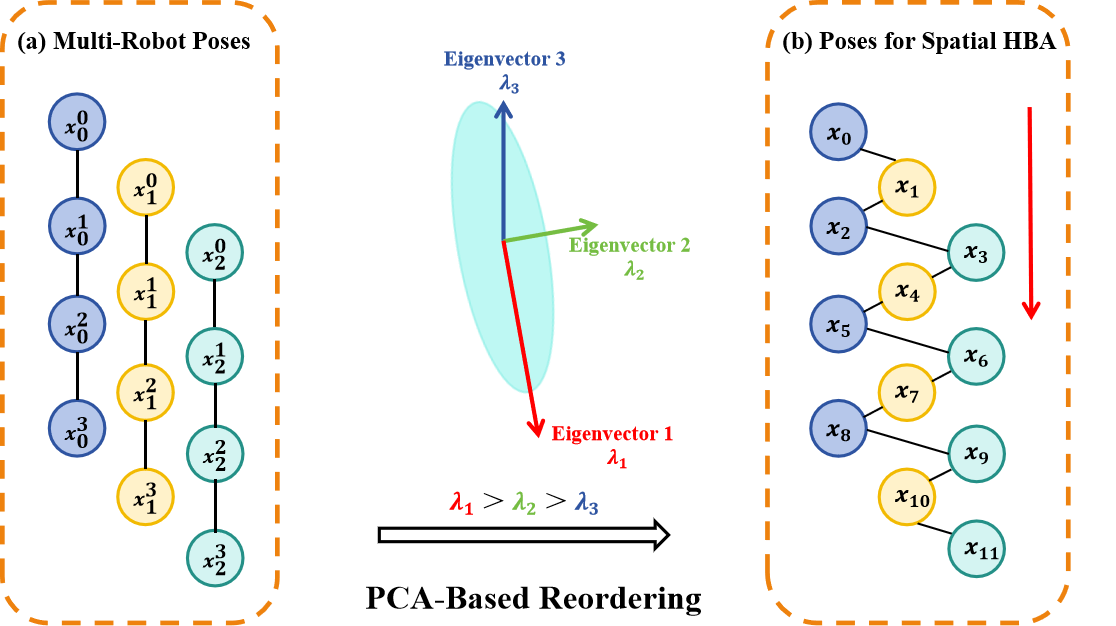}
  \caption{(a) shows the multi-robot poses in a spatial window. (b) shows the reordered poses using PCA for spatial HBA.}
  \label{fig:PCA}
\end{figure}

For two trajectories from different robots with significant spatial overlap, loop closures tend to be densely distributed, and the associated point cloud maps of adjacent loop closures often share large common areas. In such cases, it is crucial to jointly process the loops within the entire overlapping region to maintain local consistency. To address this, we adapt and extend HBA \cite{hba} into a multi-robot framework for clustered loop scenarios.

The original HBA relies on temporally ordered poses from a single robot, assuming that adjacent poses share common point cloud features. However, in our setting, poses selected via kd-tree radius search within a loop cluster are unordered and may come from different robots. To address this, we use Principal Component Analysis (PCA) \cite{pca} to analyze the spatial distribution of selected poses and reorder them along the principal axis defined by the largest eigenvalue, as illustrated in Fig.~\ref{fig:PCA}. This spatial reordering ensures that neighboring poses in the optimization sequence are also spatially close, promoting shared features and effective constraints. Once reordered, all poses are processed consistently in a unified optimization window, allowing the enhanced HBA to accurately align the geometry across overlapping regions from multiple robots.

\section{Two-Step Pose Graph Optimization}
\label{subsection:mapmerging}
The two-step PGO is used to align and refine the global structure of multi-robot trajectories. The details are as follows.

\begin{figure}[!t]
  \centering
  \includegraphics[width=1.0\linewidth]{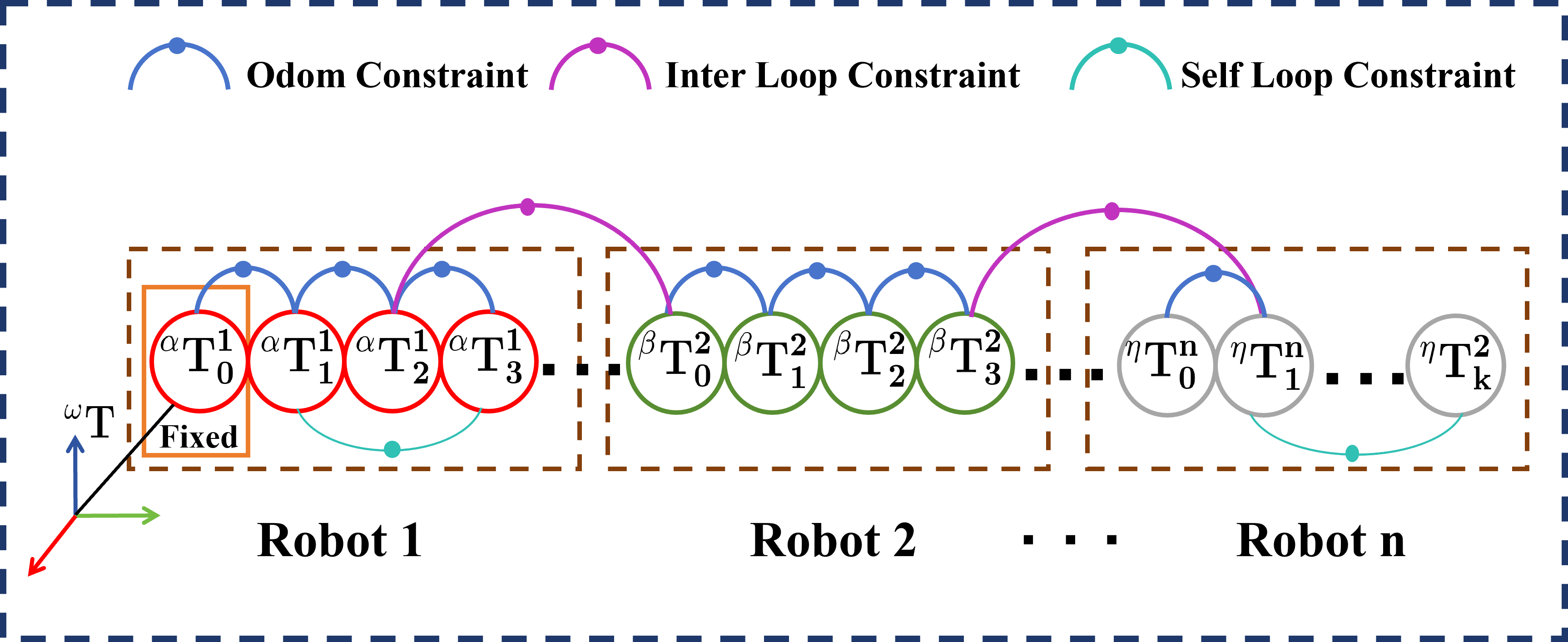}
  \caption{The first pose graph optimization, with constraints including odometry and loop closure. The first pose of the first robot is fixed as the world coordinate.}
  \label{fig:firstpgo}
\end{figure}

\subsection{First Pose Graph Optimization} 

Following loop processing, we perform the first pose graph optimization (FPGO) to coarsely align the trajectories of all robots. We construct a centralized pose graph shown in Fig.~\ref{fig:firstpgo}, in which each robot's pose sequence is included as a subgraph. The graph incorporates three types of constraint: odometry, self-loops, and inter-loops. The first pose of the first robot is fixed to define the origin point of the world coordinate, ensuring that other trajectories are aligned to this global reference frame. With this initial alignment established, we acquire a rough estimate of the relative positions among all robot trajectories, which enables us to recover the valid but previously rejected loops (in Section \ref{subsection:loopprocessing}). The recalled loops are added back to the set of loops and used as constraints in the FPGO. This
process containing optimization and recall is repeated until no new loops are recalled.


\begin{figure}[!t]
  \centering
  \includegraphics[width=0.95\linewidth]{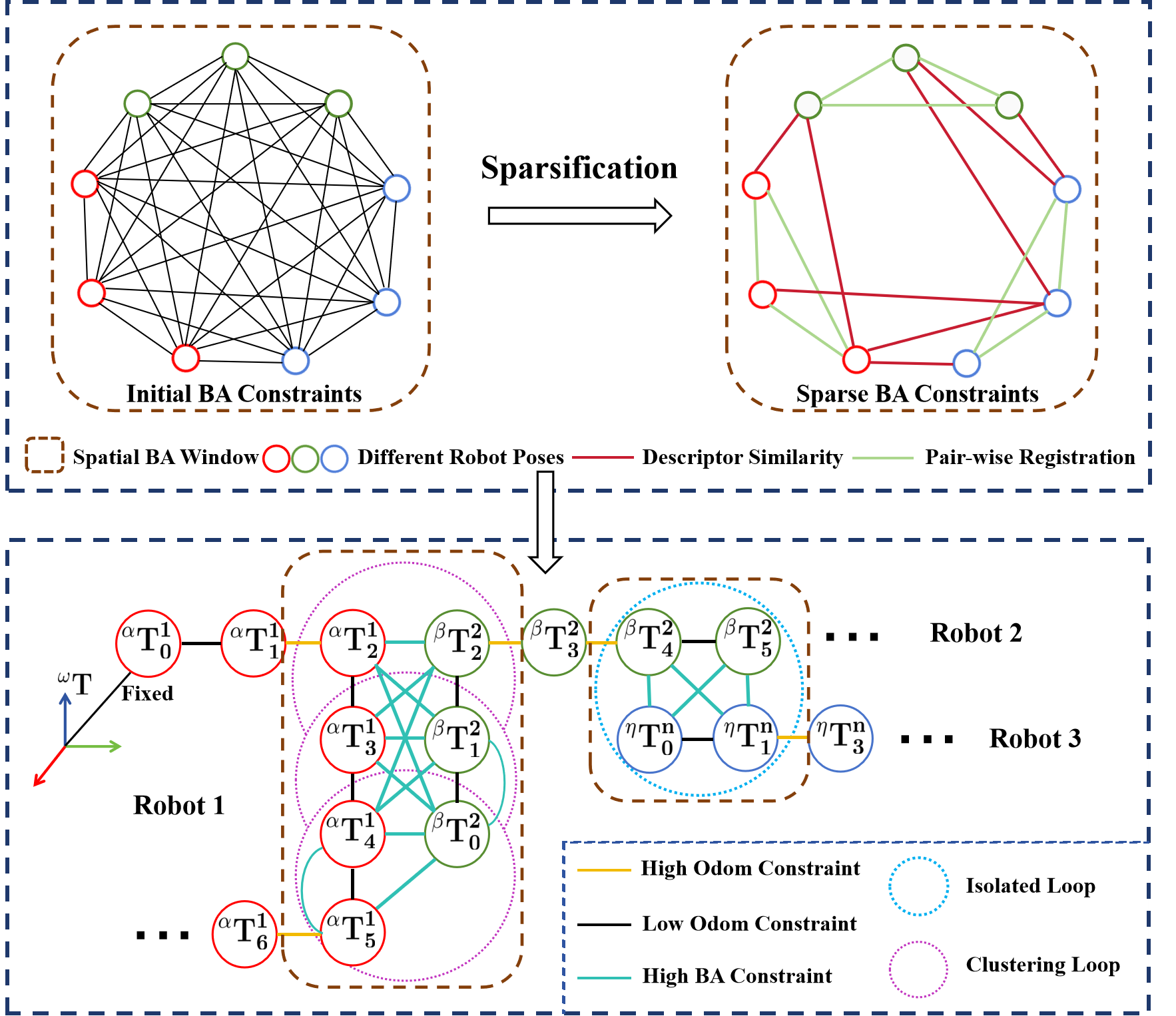}
  \caption{The upper part shows the sparsification of BA constraints in a spatial window at loop. The sparse BA constraints are add to pose graph below along with odometry constrains with different weights.}
  \label{fig:lastpgo}
\end{figure}

\subsection{Last Pose Graph Optimization}

Although spatial BA improves the accuracy of local pose near loop closures, it does not achieve global consistency and may break the continuity of odometry. To address this issue, we develop the last pose graph optimization (LPGO).

We \changed{construct} the LPGO with two types of constraints: odometry constraints to maintain trajectory smoothness, and sparsified BA-based constraints to preserve refined local structures (see Fig.~\ref{fig:lastpgo}). For each robot, odometry constraints are added between adjacent poses to preserve temporal continuity. In cases where a BA-optimized pose is adjacent to an unoptimized one, a high-weight odometry constraint is applied to mitigate potential discontinuities introduced by local optimization. In contrast, lower weights are assigned between two unoptimized poses to allow greater global adjustment flexibility. Additionally, pose pairs within the spatial BA window that exhibit strong geometric overlap and reliable correspondence are assigned high-weight BA constraints to preserve local accuracy. 

We use two different metrics for constraint selection between pose pairs of different robots and pose pairs of the same robot. For different robots, we evaluate the similarity of their descriptors obtained from RING++ \cite{ring++} and add BA constraints only when the similarity exceeds a specified threshold. For pose pairs within the same robot, we apply pair-wise registration techniques from \cite{pss-goso} to selectively retain constraints between poses that significantly constrain each other. The sparsification process is as follows. 

For an edge $\mathbf{E}_k$ connecting two pose nodes belonging to the same robot, let $\mathbf{E}_{k}^{0}$ and $\mathbf{E}_{k}^{1}$ denote the indices of the corresponding poses. The residual $\boldsymbol{\epsilon}_k$ and its associated covariance matrix $\mathbf{\Omega}_k$ between the two poses can be estimated using a pair-wise registration approach. For clarity, we denote the poses at $\mathbf{E}_{k}^{0}$ and $\mathbf{E}_{k}^{1}$ as $(\mathbf{R}_{L_0}, \mathbf{t}_{L_0})$ and $(\mathbf{R}_{L_1}, \mathbf{t}_{L_1})$, respectively. The relative transformation from frame $L_0$ to $L_1$ is computed as $\mathbf{R}^{L_0}_{L_1} = \mathbf{R}_{L_0}^\top \mathbf{R}_{L_1}$ and $\mathbf{t}^{L_0}_{L_1} = \mathbf{R}_{L_0}^\top (\mathbf{t}_{L_1} - \mathbf{t}_{L_0})$. Based on nearest neighbor matching, we obtain point-to-point correspondences $\{(\mathbf{P}^{L_0}_u, \mathbf{P}^{L_1}_u)\}_{u=1}^{U}$, where $U$ denotes the total number of matched pairs. These correspondences are used to formulate the registration residual function $\boldsymbol{\epsilon}^{\mathrm{reg}}_k$ associated with edge $\mathbf{E}_k$.
\begin{equation}
  \begin{split}
  \boldsymbol{\epsilon}^{\mathrm{reg}}_k=\sum_{u=1}^{U}(\mathbf{R}^{L_0}_{L_1}\mathbf{P}^{L_1}_u+\mathbf{t}^{L_0}_{L_1}-\mathbf{P}^{L_0}_u),
  \end{split}
\end{equation}
The Jacobian of $\boldsymbol{\epsilon}^{\mathrm{reg}}_k$ respect to the two relative poses of the same robot $\mathbf{R}^{L_0}_{L_1},\mathbf{t}^{L_0}_{L_1}$ is calculated by \eqref{eq:jacobian_regis}.

\begin{equation}
  \begin{split}
  \mathbf{J}^{\mathrm{reg}}_k=\sum_{u=1}^{U}\left[ \begin{matrix}
    -\left[ \mathbf{P}_{u}^{L_0} \right] _{\times}&		\mathbf{0}\\
    \mathbf{0}&		\mathbf{I}
  \end{matrix} \right]. \label{eq:jacobian_regis}
  \end{split}
\end{equation}
The covariance $\mathbf{\Omega}_k$ of the registration function $\boldsymbol{\epsilon}^{\mathrm{reg}}_k$ can be calculated as follows.
\begin{equation}
  \begin{split}
  \mathbf{\Omega}_k=\mathbf{J}^{\mathrm{reg}\top}_k\mathbf{J}^{\mathrm{reg}}_k.
  \end{split}
\end{equation}
The minimal eigen value of the covariance $\lambda_k^{min} = \lambda_{min}(\Omega_k)$ can be used to represent the constraint ability between $\mathbf{E}_{k}^{0}$-th pose and $\mathbf{E}_{k}^1$-th pose. When the $\lambda_{min}(\Omega_k)$ of two node is small enough, the BA constraint between them will be retained and added to the pose graph in Fig. \ref{fig:lastpgo}. On the contrary, no constraints from BA will be added between $\mathbf{E}_{k}^{0}$-th pose and $\mathbf{E}_{k}^1$-th pose with a relatively large $\lambda_{min}(\Omega_k)$ value.

\section{Experiments}
\subsection{Experimental Setup}
All algorithms are implemented in C++ using the Robot Operating System (ROS) \cite{ros}. To evaluate the performance of LEMON-Mapping, we conduct experiments on several datasets, including the publicly available S3E \cite{s3e}, GEODE \cite{geode}, MARS-LVIG \cite{lvig}, R$^3$LIVE \cite{r3live} dataset, as well as a self-collected dataset. The GEODE, MARS-LVIG and R$^3$LIVE datasets are segmented into multiple sessions with overlap, but the starting point of each part is different and the relative transformation is unknown. 
The parameters of our self-collected dataset and the public datasets are shown in Table \ref{table:selfdataset} and Table \ref{table:opendataset}, respectively.
For all public datasets, initial odometry is obtained using the LiDAR-Inertial-Odometry method recommended by the respective dataset, while the initial odometry for our own dataset is generated using FAST-LIO2 \cite{fastlio2}.

The evaluation is structured as follows. We first compare our proposed spatial BA against two state-of-the-art baselines, BALM2 \cite{balm2} and HBA \cite{hba}, in Section \ref{subsection:singlerobotstudy}. To assess full-system localization performance, we benchmark our framework against DCL-SLAM \cite{dclslam} and LAMM \cite{LAMM} in Section \ref{subsection:comparestudy}. Mapping quality and loop closure processing are then thoroughly evaluated in Sections \ref{subsection:mapping quality} and \ref{subsection:loopstudy}, respectively. Furthermore, an ablation study is conducted in Section \ref{subsection:ablationstudy} to dissect the individual contributions of key components. We also investigate the scalability of our approach under an increasing number of sessions (Section \ref{subsection:scalabilitystdudy}). Finally, Section \ref{subsec:efficiency_analysis} analyzes runtime and memory efficiency to demonstrate the practical viability of our system.

\begin{table}[t]
  \centering
  \caption{Overview of Self-Collect Dataset}
  \label{table:selfdataset}
  \begin{threeparttable}
  \begin{tabular*}{0.48\textwidth}{@{\hspace{5pt}} @{\extracolsep{\fill}} c c c c @{\hspace{5pt}}}
    \toprule
    \makecell{Sequence} & \makecell{Environment} & \makecell{Trajectory Length (m)} & \makecell{LiDAR}\\
    \midrule
    Garage & Indoor & 334 & Mid360 \\
    Library & Outdoor & 519 & Mid360 \\
    Yard & Indoor \& Outdoor & 232 & Mid360 \\
    Laboratory &  Outdoor & 98 & Mid360 \\
    Flying Arena & Indoor & 280 & Mid360 \\
    \bottomrule
  \end{tabular*}
  \end{threeparttable}
\end{table}

\begin{table}[t]
  \centering
  \caption{Overview of Public Dataset}
  \label{table:opendataset}
  \begin{threeparttable}
  \begin{tabular*}{0.48\textwidth}{@{  } @{\extracolsep{\fill}} c c c c c @{  }}
    \toprule
    \makecell{Dataset} & 
    \makecell{Sequence} &
    \makecell{Robot \\ Number} &
    \makecell{Trajectory \\ Length (m)} & 
    \makecell{LiDAR} \\
    \midrule
    \multirow{6}{*}{S3E}
        & Campus\_1  & 3 & 2989 & Velodyne \\
        & Campus\_3 & 3 & 2938 & Velodyne \\
        & Dormitory & 3 & 2168 & Velodyne \\
        & Library & 3 & 1524 & Velodyne \\
        & Teaching\_Building & 3 & 1983 & Velodyne \\
        & Tunnel & 3 & 1525 & Velodyne  \\
        \midrule
        \multirow{4}{*}{GEODE}
        & Inlandwaterways & 3 & 1209 & Avia\\
        & Tunnelingtunnel & 3 & 270 & Avia  \\
        & Stairs & 2 & 161 & Ouster \\
        & Offroad & 2 & 542 & Ouster \\
    \midrule
    \multirow{4}{*}{MARS-LVIG}
        & Airport & 2 & 4135 & Avia\\
        & Valley & 3 & 6697 & Avia \\
        & Town & 4 & 6117 & Avia \\
        & Island & 2 & 2139 & Avia \\
    \midrule
    \multirow{3}{*}{R$^3$LIVE}
        & HKU Park & 5 & 356 & Avia \\
        & HKU Campus & 10 & 338 & Avia \\
        & HKUST Campus & 20 & 945 & Avia \\
        \bottomrule
  \end{tabular*}
  \end{threeparttable}
\end{table}

\begin{figure}[!t]
  \centering
  \includegraphics[width=0.96\linewidth]{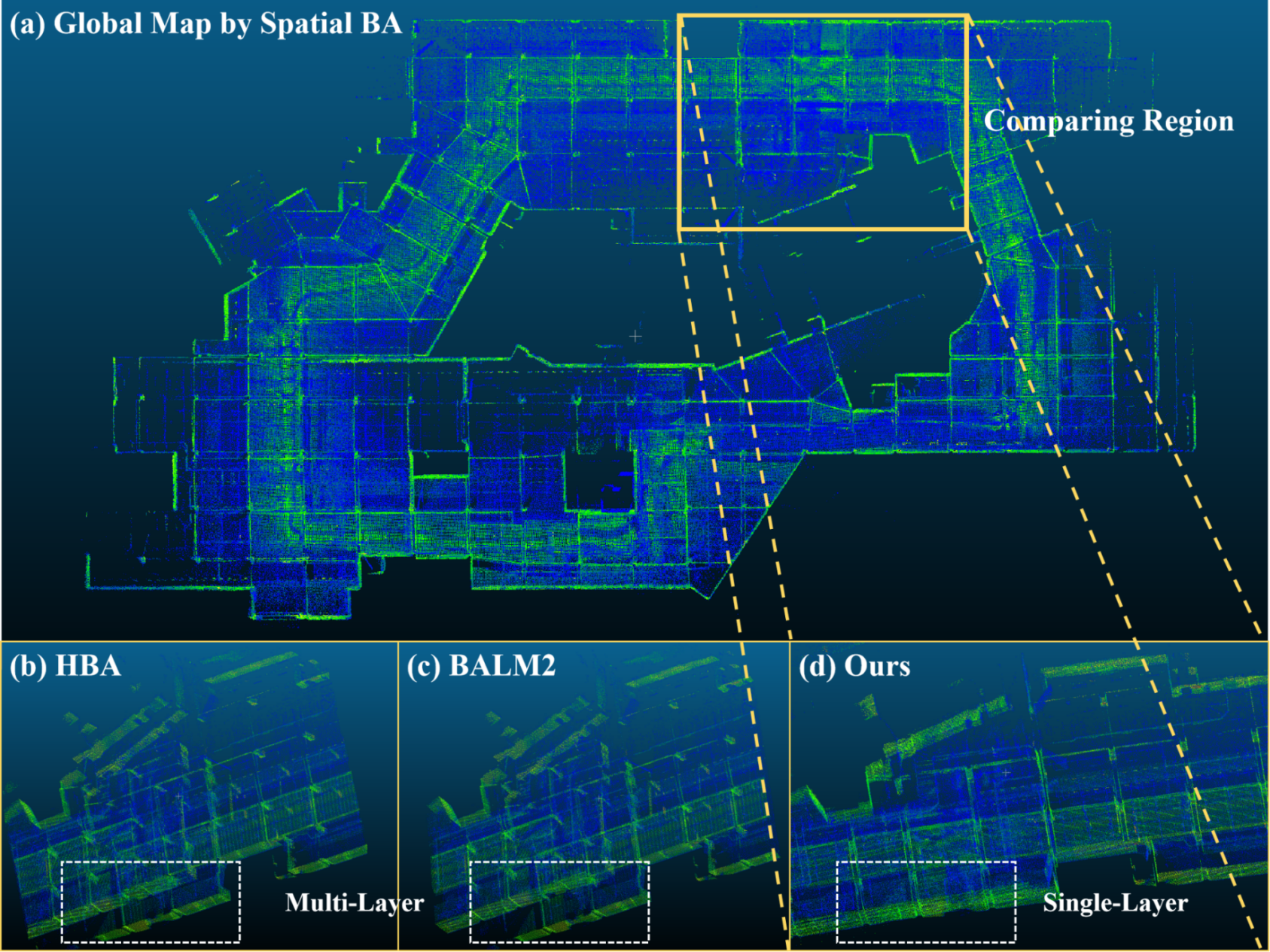}
  \caption{Result of single-robot study in our Garage dataset. (a) shows the global map generated by our method. (b)-(d) show the details of the local maps near the loop closure for different methods.}
  \label{fig:single}
  
\end{figure}

\begin{table}[t]
\centering
\caption{MME, $z$-Drift AND $z$-RMSE of Single-Robot Study}
\label{table:singlerobot}
\begin{threeparttable}
\begin{tabular*}{0.48\textwidth}{@{\hspace{10pt}} @{\extracolsep{\fill}} c c c c c @{\hspace{10pt}}}
\toprule
\makecell{Sequence} & \makecell{Method} & \makecell{MME} & \makecell{$z$-Drift} & \makecell{$z$-RMSE} \\
\midrule
\multirow{3}{*}{Garage} 
    & HBA    & -6.83 & 5.98 & 7.14 \\
    & BALM2  & \textbf{-6.93} & 5.95 & 7.17 \\
    & Ours   & -6.86 & \textbf{4.87} & \textbf{5.35} \\
\midrule
\multirow{3}{*}{Library} 
    & HBA    & -5.95 & 6.17 & 6.78 \\
    & BALM2  & -6.20 & 6.51 & 7.31 \\
    & Ours   & \textbf{-6.23} & \textbf{3.68} & \textbf{4.53} \\
\midrule
\multirow{3}{*}{Yard} 
    & HBA    & -6.39 & 2.33 & 2.70 \\
    & BALM2  & -5.93 & 1.86 & 2.12 \\
    & Ours   & \textbf{-6.40} & \textbf{1.12} & \textbf{1.25} \\
\midrule
\multirow{3}{*}{Laboratory} 
    & HBA    & \textbf{-6.26} & 3.24 & 3.72 \\
    & BALM2  & -6.04 & 3.22 & 3.70 \\
    & Ours   & -6.20 & \textbf{3.17} & \textbf{3.64} \\
\bottomrule
\end{tabular*}
\end{threeparttable}
\end{table}

\begin{figure*}[!t]
  \centering
  \includegraphics[width=0.96\linewidth]{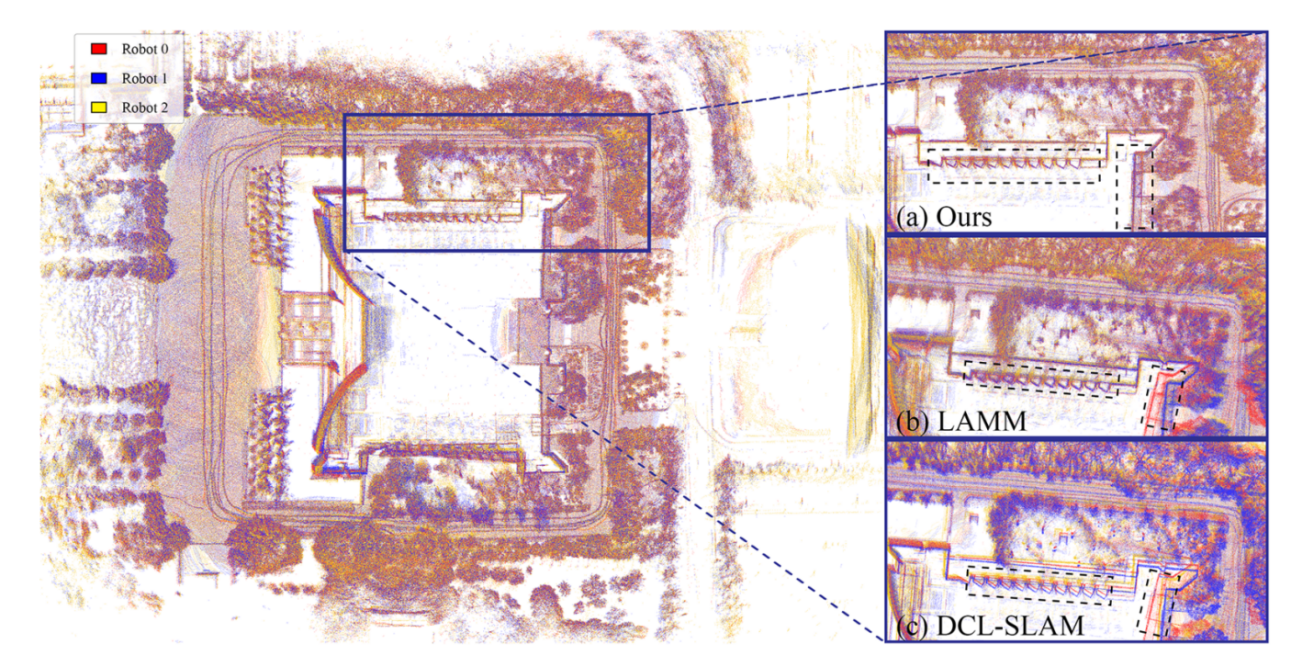}
  \caption{The map merging result of S3E library. The local maps \changed{reconstructed} by three methods are selected for comparison. Our method almost eliminates the divergence of submaps. While LAMM \cite{LAMM} and DCL-SLAM \cite{dclslam} suffer from serious inconsistencies.} 
  \label{fig:compare_pcd}
\end{figure*}

\subsection{Single-Robot Study}
\label{subsection:singlerobotstudy}

We utilize our self-collected single-robot dataset consisting of several indoor and outdoor scenes (Table \ref{table:selfdataset}) here. During data acquisition, the $z$-axis value remains largely stable across each scene, allowing it to serve as a reference to evaluate vertical drift.

Each sequence in our dataset includes loop closures that enable spatial BA. We compare our spatial BA with BALM2 \cite{balm2} and HBA \cite{hba}, using average $z$-axis drift ($z$-DRIFT) and $z$-axis Root Mean Square Error ($z$-RMSE) as primary evaluation metrics. Both metrics are computed with respect to the $z$-value of the first frame as a reference, allowing a consistent evaluation of vertical alignment over time. Due to the absence of ground truth, we also compute the Mean Map Entropy (MME) using MapEval \cite{mapeval}, where lower values indicate better map consistency and reduced clutter. To ensure a fair comparison of the performance of the three BA methods, all methods are evaluated using raw odometry trajectories without any prior loop-based refinement. For computational feasibility in large-scale scenes, BALM2 is executed in a sliding-window configuration.

Table \ref{table:singlerobot} presents the results. Our method achieves the best performance across all metrics in the Library and Yard scenarios. Furthermore, it consistently outperforms BALM2 and HBA in $z$-DRIFT and $z$-RMSE, demonstrating its superior capability in mitigating global drift. Fig.~\ref{fig:single} shows the mapping performance of HBA, BALM2 and our proposed method in the Garage sequence, with the framed area indicating the region where loop closures occur. It can be seen that baseline methods exhibit significant layering in this area, whereas our method achieves superior alignment. This result clearly demonstrates the strong capability of our spatial BA approach ins handling loop closure regions.

\begin{figure*}[!t]
  \centering
  \vspace{-0.5cm}
  \includegraphics[width=0.96\linewidth]{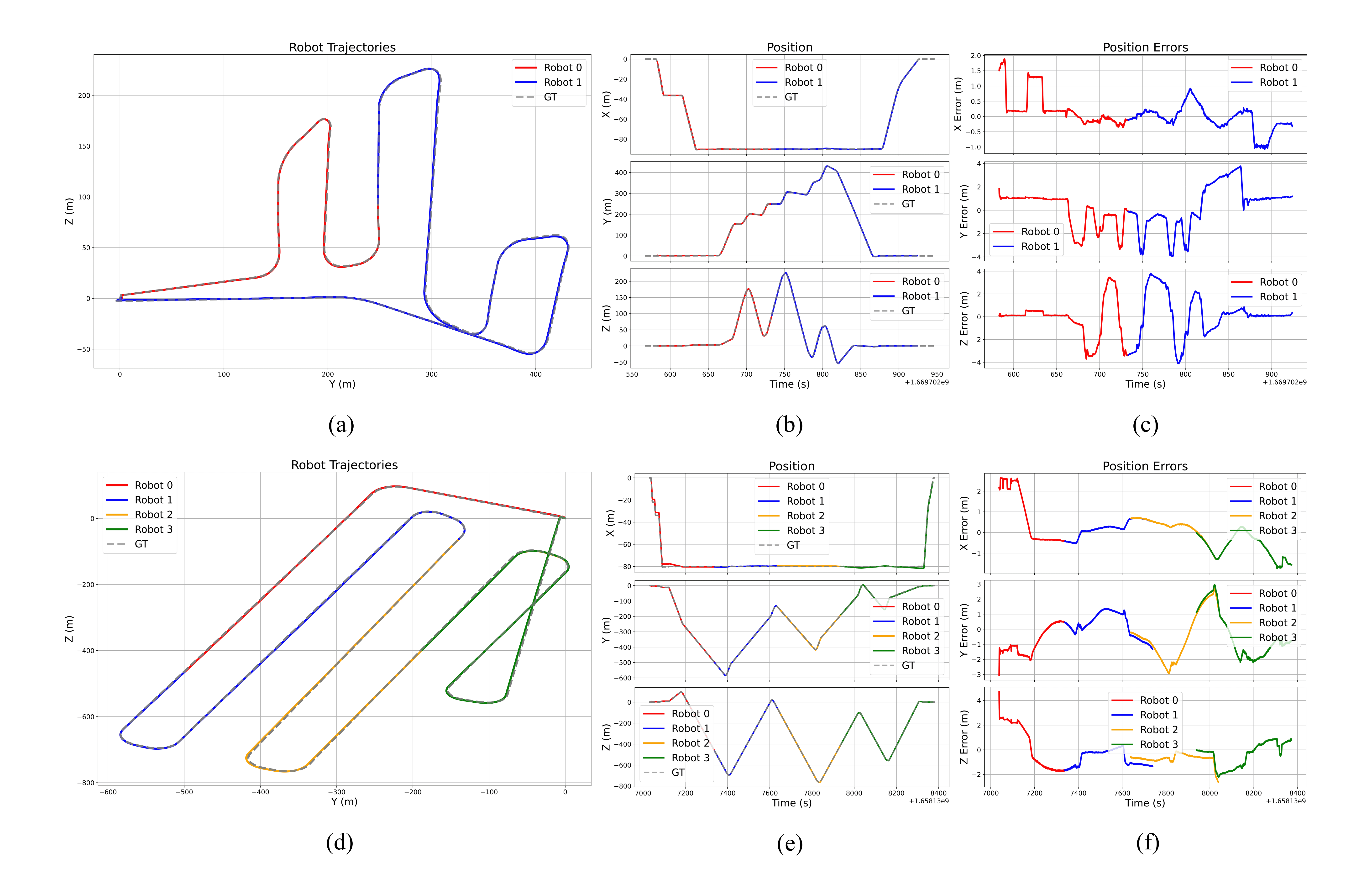}
  \vspace{-0.5cm}
  \caption{The optimized trajectory, position, and position error for proposed method in MARS-LVIG Island ((a)-(c)) and Town ((d)-(f)). 
  }
  \label{fig:evaluate}
\end{figure*}

\begin{figure}[!t]
  \centering
  \includegraphics[width=0.96\linewidth]{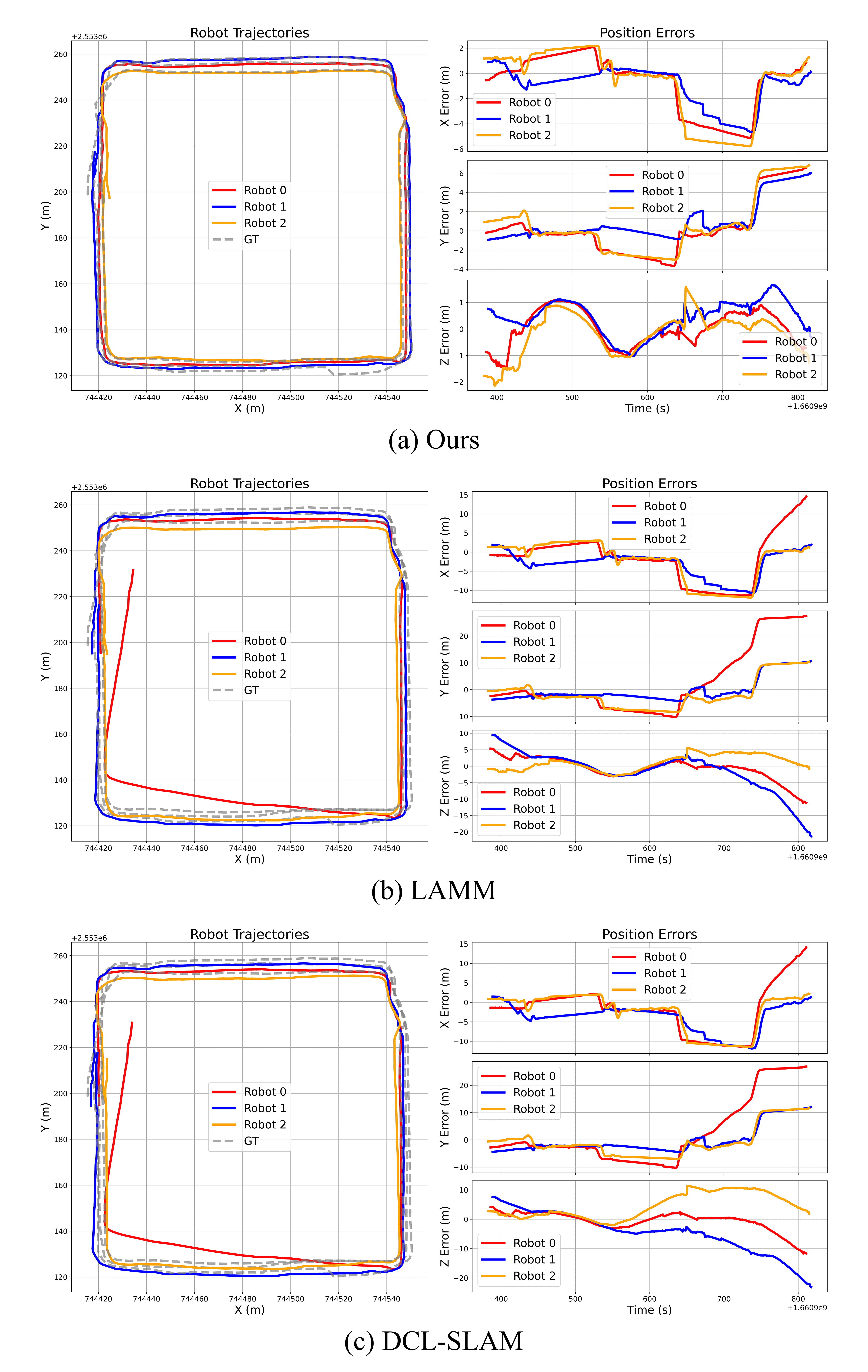}
  \caption{Comparison of multi-robot trajectories and error between our method, LAMM, and DCL-SLAM on S3E Library. The ground truth is shown as a dashed line in the figure.}
  \label{fig:compare_library}
\end{figure}

\begin{figure}[!t]
  \centering
  \includegraphics[width=0.96\linewidth]{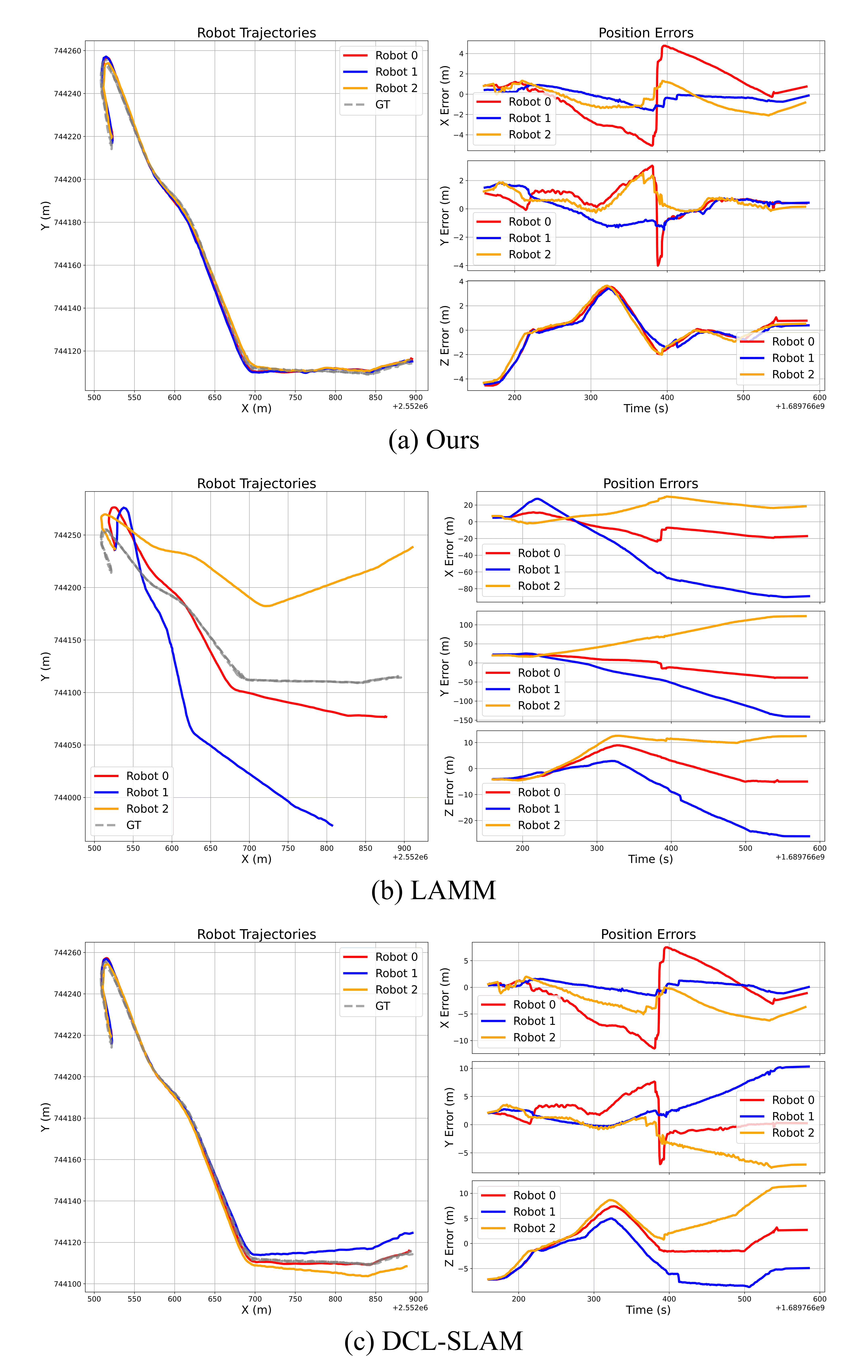}
  \caption{The same comparison of multi-robot trajectories on S3E Tunnel as Fig.~\ref{fig:compare_library}.}
  \label{fig:compare_tunnel}
\end{figure}
\subsection{Multi-Robot Localization Study}
\label{subsection:comparestudy}

\begin{table}[t]
\centering
\caption{RMSE of The ATE(m) of Localization Study}
\label{table:compare}
    \begin{threeparttable}
        \begin{tabular*}{0.48\textwidth}{@{\hspace{15pt}} @{\extracolsep{\fill}} c c c c @{\hspace{15pt}}}  
        \toprule
        \makecell{Sequence} & \makecell{DCL-SLAM} & \makecell{LAMM} & \makecell{Ours}\\
        \midrule
        Campus\_3 & 17.89 & 12.51 & \textbf{3.51} \\
        Dormitory & 3.52 & 28.67 & \textbf{3.46} \\
        Library & 6.37 & 4.58 & \textbf{1.48} \\
        Tunnel & 3.09 & \texttimes & \textbf{0.98} \\
        Inlandwaterways & \texttimes & \texttimes & \textbf{7.44} \\
        Offroad & \texttimes & 25.45 & \textbf{1.30} \\
        Tunnelingtunnel & \texttimes  & 0.62 & \textbf{0.16} \\ 
        Airport & \texttimes  & 8.41 & \textbf{1.30} \\ 
        Town & \texttimes & \texttimes & \textbf{1.49} \\
        Island & \texttimes & \texttimes & \textbf{0.801} \\
        \bottomrule
        \end{tabular*}
    \end{threeparttable}
\end{table}

To further evaluate the proposed framework, we perform comparative multi-robot experiments using the MARS-LVIG \cite{lvig}, GEODE \cite{geode}, and multi-robot S3E \cite{s3e} datasets. Our framework is compared with the multi-robot SLAM system DCL-SLAM \cite{dclslam} and the multi-session map merging method LAMM \cite{LAMM}.

The accuracy of map merging is evaluated using the Root Mean Square Error (RMSE) of the Absolute Trajectory Error (ATE) in meters. A failure is defined as any sequence with an RMSE greater than 30 meters and is indicated by “\texttimes” in the table. As shown in Table \ref{table:compare}, the best values are highlighted in bold. Our framework successfully merges all sequences, while LAMM and DCL-SLAM fail in four and six sequences, respectively. In successful cases, our framework achieves significantly lower RMSE values, indicating higher merging accuracy.

We select a scene where all three methods successfully merge the multi-robot point cloud for display. Fig.~\ref{fig:compare_pcd} visualizes the results from the S3E Library sequence. The left image shows our globally consistent and accurate merged map, while the right part shows the local fused map of the three methods. Our method yields tightly aligned local maps, whereas LAMM and DCL-SLAM exhibit severe local divergences. 

The comparison of trajectory and ground truth further confirms the superior accuracy of our method. 
Fig.~\ref{fig:evaluate} shows the optimized trajectories and error plots for different scenes. The multi-robot trajectories generated by our framework are highly consistent with the ground truth, while the tracking errors remain low and stable, thereby demonstrating the superior performance and robustness of our method. Fig.~\ref{fig:compare_library} and Fig.~\ref{fig:compare_tunnel} presents a comparison of multi-robot trajectories between our method and the baselines. The trajectories estimated by our method closely follow the ground truth for each robot, exhibiting low and stable errors. In contrast, the benchmark methods show significant deviations in some regions and minor local inconsistencies in others. The large deviations are likely caused by incorrect loop closures, while the local inconsistencies may result from the lack of bundle adjustment, which fails to eliminate the divergence of submaps.

The performance disparity can be attributed to LAMM and DCL-SLAM relying solely on PGO with loop closure constraints, which neglects the refinement of overlapping local regions. Lack of attention to the geometric structure of point cloud maps leads to serious local multi-robot map divergence. Our framework explicitly re-examines the role of loop closures and enhances utilization by performing spatial BA in local regions, while propagating the refined results globally via the last pose graph optimization. This process effectively reduces divergence and ensures consistent multi-session map fusion. 

\subsection{Multi-Robot Mapping Quality Evaluation}
\label{subsection:mapping quality}

In this section, we evaluate the multi-robot mapping quality. For the MARS-LVIG dataset, the ground-truth map is a high-precision point cloud generated by the DJI L1 LiDAR sensor and processed with the DJI Terra system. For the S3E dataset, since no ground-truth map is available and the ground-truth poses do not contain rotational information, we evaluate the geometric quality of the reconstructed maps using the average plane thickness and planarity metrics. Detailed definitions of the evaluation metrics are provided in MapEval \cite{mapeval}.

Table~\ref{table:mappingquality} presents the quantitative comparison of mapping quality between our method and LAMM. Our approach achieves the best performance across all datasets, including lower average Wasserstein distance (AWD), lower Chamfer distance (CD), smaller Spatial Consistency Score (SCS), and lower Mean Map Entropy (MME). These results indicate that our method produces higher-quality maps with better alignment to the ground-truth maps. In these three large-scale environments covering areas of over 100,000 square meters, our AWD ranges from 0.25 m to 0.65 m, and the CD ranges from 0.43 m to 1.11 m, demonstrating strong consistency with the ground-truth maps. Furthermore, the lower SCS and MME values suggest that our method achieves better spatial consistency than LAMM, which can be attributed to the proposed spatial BA and the LPGO for improving local accuracy and global consistency. Additionally, Fig.~\ref{fig:AWD} visualizes the Wasserstein distance distribution on the Island dataset. As shown in Fig.~\ref{fig:AWD} (a), the errors are concentrated within the $3 \sigma$ bound (0.93 m), with most voxels around 0.25 m. Fig.~\ref{fig:AWD} (b) shows that almost all voxels have Wasserstein distance values below 0.5 m and exhibit a spatially consistent distribution, indicating that the reconstructed multi-robot map maintains high spatial consistency.

Table~\ref{table:thickness} reports the averaged plane thickness and planarity evaluation results over all sequences in the S3E dataset. Our method achieves the best performance compared to baselines, indicating that our spatial BA significantly improves geometric accuracy and effectively mitigates the multi-robot map divergence commonly observed in PGO-based methods.

\begin{table}[t]
\centering
\caption{Mapping Quality Evaluation}
\label{table:mappingquality}
\renewcommand{\arraystretch}{1.0}
\begin{threeparttable}
\begin{tabular}{@{}lccccc@{}}
\toprule[0.03cm]
\multirow{2}{*}{\textbf{Data}} & \multirow{2}{*}{\textbf{Method}} & \multicolumn{4}{c}{\textbf{Metrics}} \\
\cmidrule(lr){3-6}
& &  \textbf{AWD(m) $\downarrow$} & \textbf{CD(m) $\downarrow$} & \textbf{SCS $\downarrow$} &\textbf{MME $\downarrow$} \\
\midrule[0.03cm]
\multirow{2}{*}{Island}
& LAMM  & 1.30 & 6.44 & 0.50 & -5.51 \\
& Ours  & \textbf{0.30} & \textbf{0.43} & \textbf{0.49} & \textbf{-6.13} \\
\midrule
\multirow{2}{*}{Town} 
& LAMM & 1.77 & 10.64 & 0.57 & -6.37 \\
& Ours & \textbf{0.65} & \textbf{1.11} & \textbf{0.47} & \textbf{-6.52} \\
\midrule
\multirow{2}{*}{Airport} 
& LAMM & 0.75 & 36.05 & 0.62 & -6.17 \\
& Ours & \textbf{0.25} & \textbf{0.47} & \textbf{0.57} & \textbf{-6.28} \\
\bottomrule[0.03cm]
\end{tabular}
\end{threeparttable}
\vspace{-0.5em}
\end{table}

\vspace{20pt}

\begin{table}[!t]
\centering
\caption{Geometry Accuracy Comparison}
\label{table:thickness}
    \begin{threeparttable}
        \begin{tabular}{lccc}
            \toprule
            Metric & DCL-SLAM & LAMM & Ours \\
            \midrule   
            Thickness(m) $\downarrow$ & 0.12 & 0.14 & \textbf{0.08} \\
            Planarity  $\uparrow$     & 0.61 & 0.59 & \textbf{0.86} \\
            \bottomrule
        \end{tabular}
    \end{threeparttable}
\end{table}

\begin{figure}[!t]
  \centering
  \includegraphics[width=1.0\linewidth]{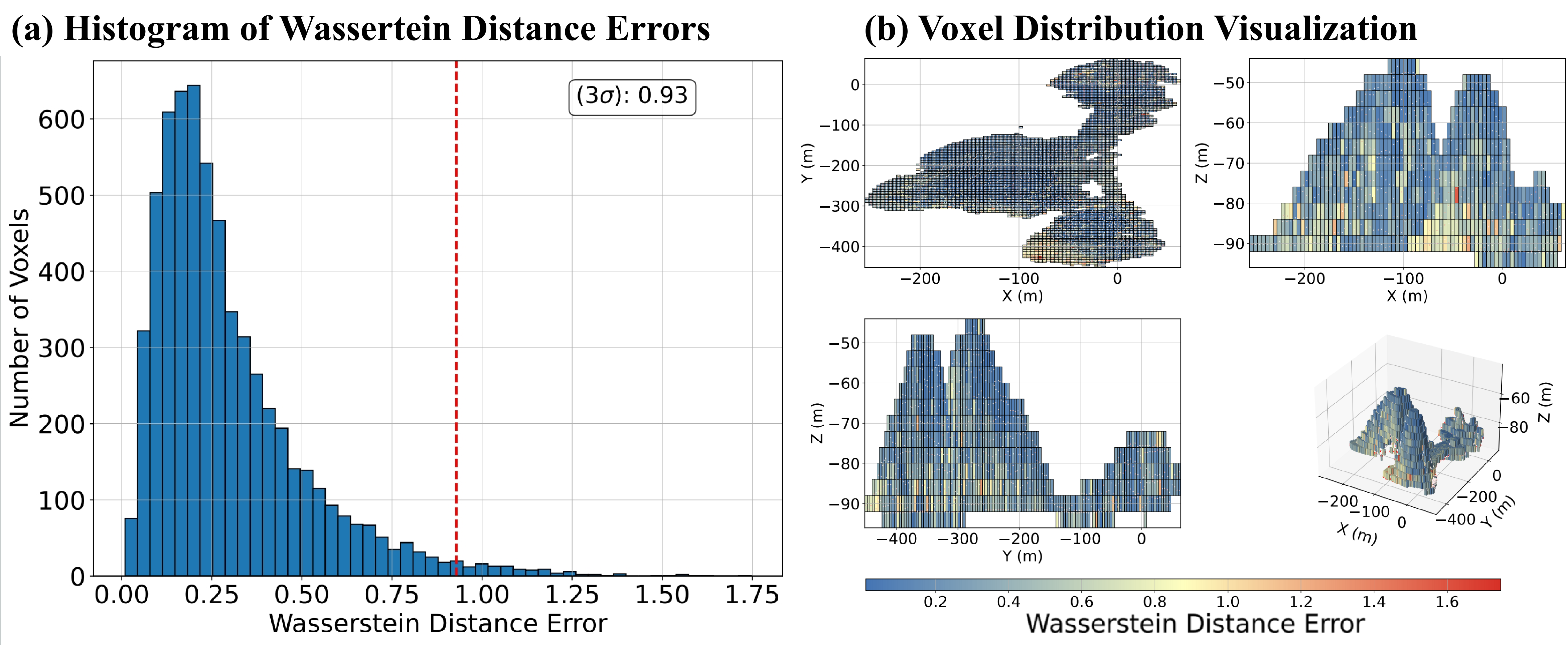}
  \caption{Visualization of Wasserstein distance errors on the Island dataset.
 (a) Histogram of errors over all voxels, where the dashed line indicates the $3 \sigma$ bound (0.93 m).
 (b) Spatial distribution of voxel-wise errors visualized from multiple coordinates, showing the consistency of reconstruction quality across the environment.}
  \label{fig:AWD}
\end{figure}

\subsection{\changed{Loop Processing Study}}
\label{subsection:loopstudy}

\changed{In this section, we dig deeper into the loop processing module in Section \ref{subsection:loopprocessing}. Thorough experiments are conducted on S3E dataset to demonstrate the effectiveness of the whole loop processing procedure.} 

\begin{table}[t]
\centering
\caption{Loop Closure Recall Comparison}
\label{table:loop_recall}
    \begin{threeparttable}
        \begin{tabular*}{0.48\textwidth}{@{ } @{\extracolsep{\fill}} c c c c c c @{ }}  
        \toprule
        \makecell{Sequence} & \makecell{Initial} & \makecell{After Rejection} &  \makecell{After Recalling} & \makecell{RMSE \\ $w/o$ LR} & \makecell{RMSE \\ $w/$ LR}\\
        \midrule
        Campus\_1 & 225 & 185 & 190  & 9.98 & \textbf{9.97} \\
        Dormitory & 106 & 76  & 77 & 4.15 & \textbf{4.12} \\
        Library & 286 & 235  & 238 & 1.41 & \textbf{1.40} \\
        Tunnel & 208 & 182  & 194  & 1.40 & \textbf{1.29} \\
        \bottomrule
        \end{tabular*}
    \end{threeparttable}
\end{table}

\begin{figure}[!t]
  \centering
  \includegraphics[width=0.96\linewidth]{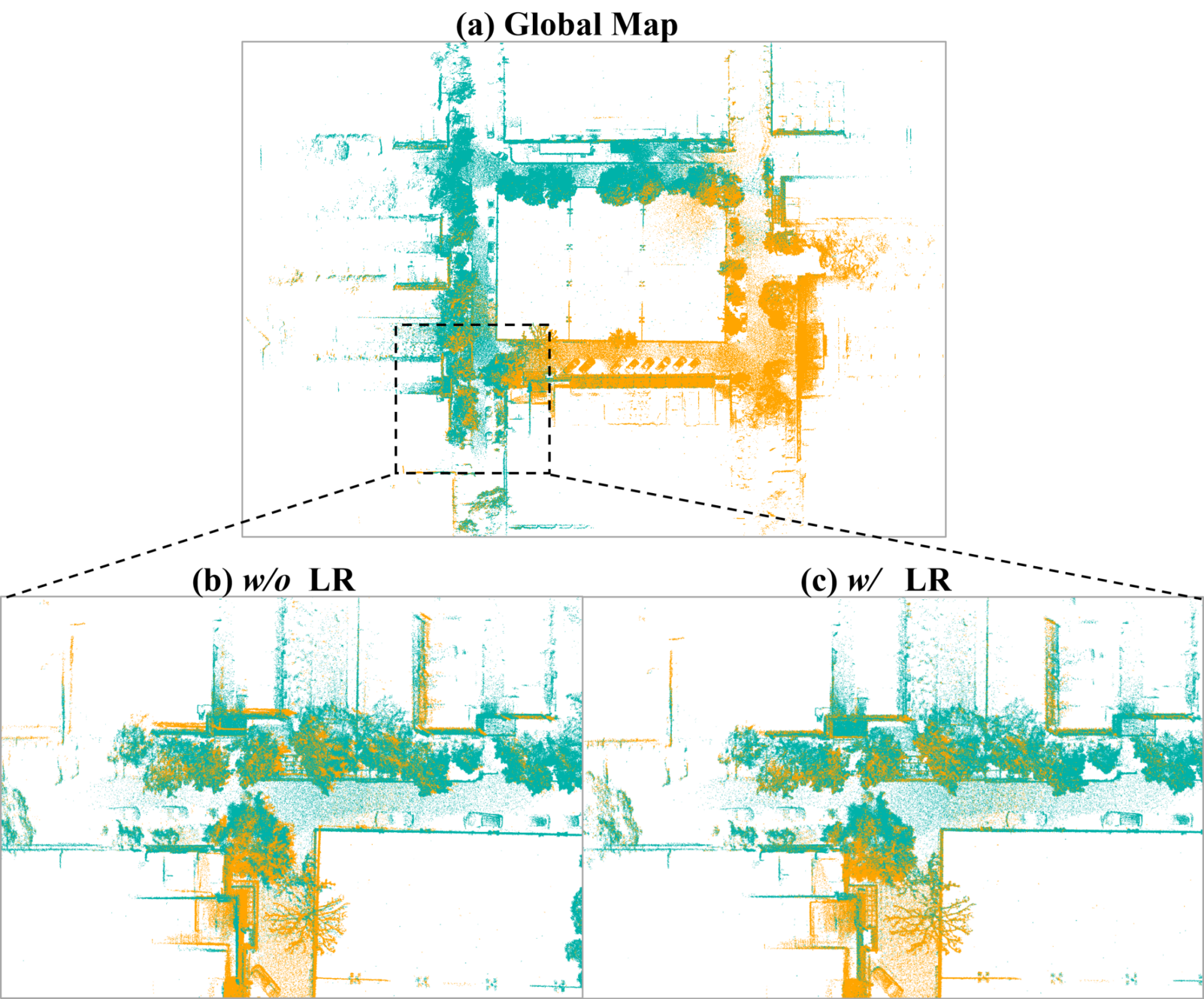}
  \caption{(a) shows the global map in Laboratory with loop recall. (b) and (c) present enlarged views of the same region without and with loop closure, respectively. In (c), the recalled loop information contributes to improved alignment and consistency across multi-robot maps.}
  \label{fig:looprecall}
\end{figure}

\changed{First, we highlight the role of the loop recall module. We compare the performance of the FPGO with and without loop recall (LR) using RMSE of the ATE (m), reported in the table as “RMSE $w/$ LR” and “RMSE $w/o$ LR,” respectively. As the S3E dataset contains a large number of complex scenes that combine unstructured and structured data, as well as many scenes with high similarity, it is easy to have a large number of false loops, so we use it as the dataset for loop recall.} 

\changed{Table \ref{table:loop_recall} shows the number of loops in different loop processing stages, as well as the results of the FPGO with and without loop recall. The experimental results show that the FPGO achieves smaller errors and better performance when mistakenly rejected loops are recalled. Beyond its impact on PGO, loop recall also introduces additional constraints that benefit subsequent bundle adjustment.}
\changed{Fig.~\ref{fig:looprecall} illustrates the results in our laboratory scene. (a) presents the global map with the loop recall step. (b) and (c) compare the local mapping results without and with loop recall, respectively. It is evident that the loop recall introduces additional constraints that significantly improve point cloud alignment within the selected region.}

\newcommand{\bestf}[1]{\textbf{#1}}
\newcommand{\badloss}[1]{\textcolor{red!70!black}{#1}}

\begin{table}[t]
  \centering
  \caption{Loop Closure Outlier Rejection Comparison.}
  \label{tab:loop_filter_comparison}
  \begin{tabular}{llcccc}
  \hline
  Dataset & Method & Loops & Precision & Recall & F1 \\
  \hline
  \multirow{5}{*}{Campus\_1}
  & \cellcolor{gray!10}Ours & \cellcolor{gray!10}190 & \cellcolor{gray!10}86.8\% & \cellcolor{gray!10}98.2\% & \cellcolor{gray!10}\bestf{92.2\%} \\
  & PCM Best-F1 & 209 & 78.0\% & 97.0\% & 86.5\% \\
  & GNC Best-F1 & 189 & 81.0\% & 91.1\% & 85.7\% \\
  & PCM P$\geq$Ours & -- & -- & -- & -- \\
  & GNC P$\geq$Ours & \badloss{125} & 87.2\% & \badloss{64.9\%} & 74.4\% \\
  \hline
  \multirow{5}{*}{Dormitory}
  & \cellcolor{gray!10}Ours & \cellcolor{gray!10}77 & \cellcolor{gray!10}87.0\% & \cellcolor{gray!10}94.4\% & \cellcolor{gray!10}\bestf{90.5\%} \\
  & PCM Best-F1 & 104 & 68.3\% & 100.0\% & 81.1\% \\
  & GNC Best-F1 & 89 & 74.2\% & 93.0\% & 82.5\% \\
  & PCM P$\geq$Ours & -- & -- & -- & -- \\
  & GNC P$\geq$Ours & \badloss{11} & 90.9\% & \badloss{14.1\%} & 24.4\% \\
  \hline
  \multirow{5}{*}{Library}
  & \cellcolor{gray!10}Ours & \cellcolor{gray!10}238 & \cellcolor{gray!10}89.5\% & \cellcolor{gray!10}95.1\% & \cellcolor{gray!10}\bestf{92.2\%} \\
  & PCM Best-F1 & 280 & 78.2\% & 97.8\% & 86.9\% \\
  & GNC Best-F1 & 259 & 83.4\% & 96.4\% & 89.4\% \\
  & PCM P$\geq$Ours & -- & -- & -- & -- \\
  & GNC P$\geq$Ours & \badloss{55} & 92.7\% & \badloss{22.8\%} & 36.6\% \\
  \hline
  \multirow{5}{*}{Tunnel}
  & \cellcolor{gray!10}Ours & \cellcolor{gray!10}194 & \cellcolor{gray!10}86.1\% & \cellcolor{gray!10}95.4\% & \cellcolor{gray!10}90.5\% \\
  & PCM Best-F1 & 195 & 84.1\% & 93.7\% & 88.6\% \\
  & GNC Best-F1 & 196 & 86.7\% & 97.1\% & \bestf{91.6\%} \\
  & PCM P$\geq$Ours & \badloss{3} & 100.0\% & \badloss{1.7\%} & 3.4\% \\
  & GNC P$\geq$Ours & 196 & 86.7\% & 97.1\% & 91.6\% \\
  \hline
  \end{tabular}
\end{table}

\changed{To evaluate the effectiveness of the proposed outlier rejection module, we compare it with two commonly used loop closure outlier rejection methods, namely PCM \cite{pcm1} and GNC \cite{gnc}. All methods take the same raw loop closure candidates from RING++ \cite{ring++} as input. Since both PCM and GNC are sensitive to parameter choices, we perform a parameter sweep for each baseline. For PCM, we vary the pairwise consistency threshold using 15 settings ranging from 0.02 to 50.0. For GNC, we vary the robust residual truncation threshold $\bar{c}$ using 9 settings ranging from 5 to 40.}

\changed{Table~\ref{tab:loop_filter_comparison} reports the loop rejection results. Since ground-truth labels for loop closure candidates are unavailable, we use the optimized trajectory as reference (obtained by full framework) and regard a candidate as correct if the Euclidean distance between the two associated poses is below 5 m. The retained loops are treated as predicted positives, from which the number of loops, precision, recall, and F1 score are computed. For PCM and GNC, "Best-F1" denotes the best result among all tested parameters, while "P$\geq$Ours" denotes the setting that preserves the most loops under the constraint that its precision is no lower than ours. Entries marked with "--" indicate that no tested setting satisfies this constraint. In the table, bold F1 scores highlight the best result among ours and the Best-F1 settings of PCM/GNC for each dataset, and colored entries highlight notable loop loss under the precision-constrained setting.}

\changed{The results show that our method achieves a more favorable balance between outlier rejection and inlier preservation. In Campus\_1, Dormitory, and Library, our method achieves higher F1 scores than both PCM and GNC under their Best-F1 settings, while in Tunnel it remains comparable to the best GNC result. More importantly, the P$\geq$Ours results show that, when constrained to reach the same precision level as our method, the baselines often retain substantially fewer loop closures, or fail to find a feasible parameter setting. For example, in Library, GNC retains only 55 loop closures under this constraint, compared with 238 retained by our method. These results indicate that PCM and GNC are more sensitive to the precision--recall trade-off, whereas the proposed loop processing module preserves a sufficient number of reliable loop closures while maintaining high precision, providing clean and sufficiently dense constraints for the PGO and BA stages.}

\subsection{Ablation Study}
\label{subsection:ablationstudy}

\begin{table}[t]
\centering
\caption{RMSE of The ATE(m) of Ablation Study}
\label{table:ablation}
    \begin{threeparttable}
        \begin{tabular*}{0.48\textwidth}{@{\hspace{15pt}}@{\extracolsep{\fill}} c c c c @{\hspace{15pt}}}
        \toprule
        \makecell{Sequence} & \makecell{FPGO} & \makecell{FPGO + BA} & \makecell{LEMON Full}\\
        \midrule
        Campus\_1 & 9.97 & 10.05 & \textbf{9.94}  \\
        Campus\_3 & 3.82 & 4.26 & \textbf{3.51} \\
        Dormitory & 4.12 & 4.40 & \textbf{3.46} \\
        Tunnel & 1.29 & 1.52 & \textbf{0.98} \\
        Stairs & 0.194 & 0.191 & \textbf{0.142} \\
        Valley & 7.84 & 7.39 & \textbf{6.76} \\
        Island & 0.963 & 1.076 & \textbf{0.801} \\
        \bottomrule
        \end{tabular*}
    \end{threeparttable}
\end{table}

\begin{figure}[!t]
  \centering
  \includegraphics[width=0.96\linewidth]{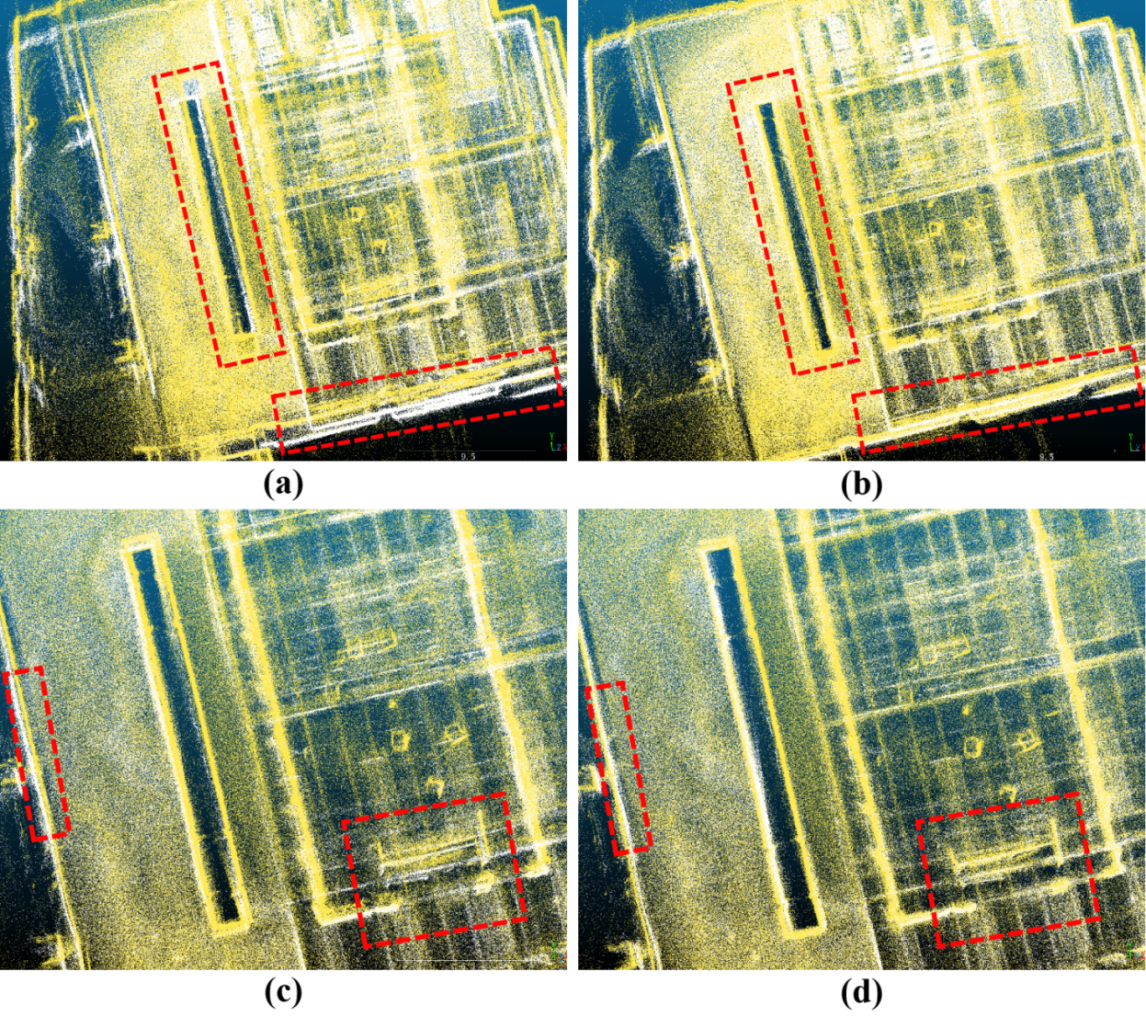}
  \caption{Result of ablation study in our Flying Arena dataset. (a) and (b) show the merged map of FPGO and FPGO + BA. (c) and (d) show the merged map of FPGO + BA and LEMON-mapping full model. The comparison regions are framed in red.}
  \label{fig:ablation}
\end{figure}

\begin{figure}[!t]
  \centering
  \includegraphics[width=1.0\linewidth]{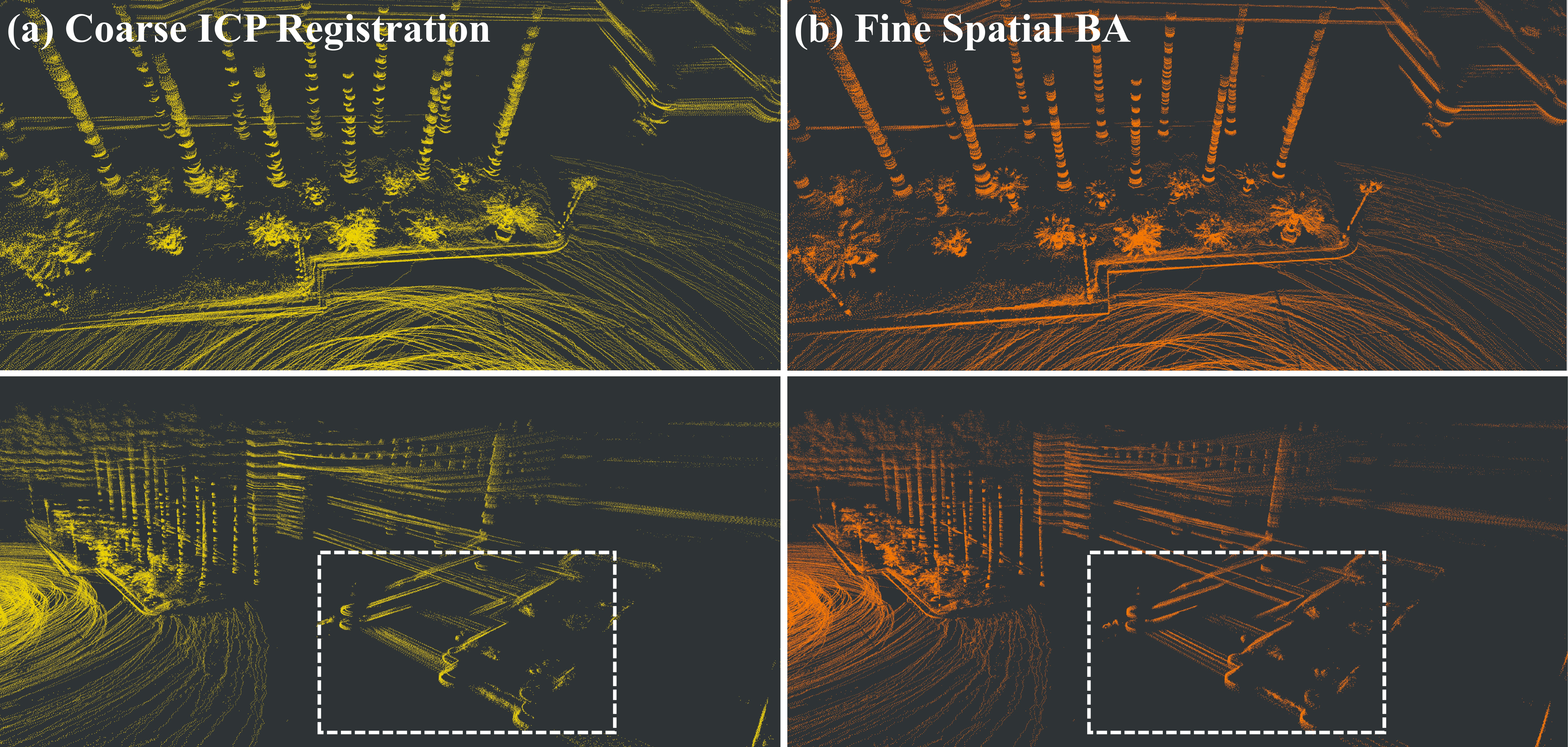}
  \caption{The map comparison of coarse registration (a) and fine multi-robot spatial BA (b) in S3E Library. It can be clearly seen that GICP only achieves basic alignment of multi-robot submaps. In the upper image, tree trunks and signs show obvious misalignment, and road edges are noticeably blurred. In the lower image, steps exhibit severe \changed{divergence}. In contrast, after fine spatial BA, the submaps are completely consistent, with excellent geometric quality and clear edge structures.}
  \label{fig:coarse-fine-compare}
\end{figure}

\begin{figure*}[!t]
  \centering
  \includegraphics[width=0.96\linewidth]{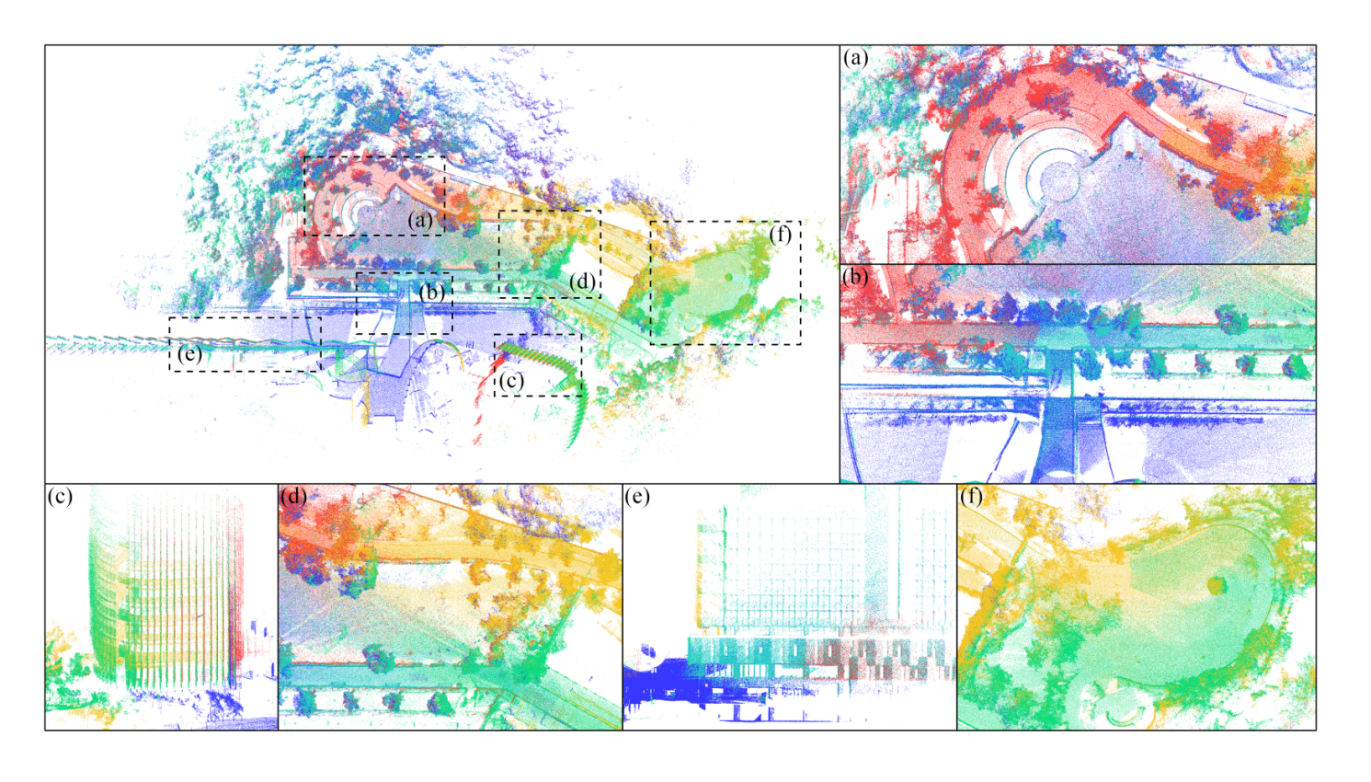}
  \caption{The map merging result of the five-session dataset in R$^3$LIVE HKU Park. The local maps of (a)-(f) are enlarged to show the details.}
  \label{fig:r3live}
\end{figure*}

To understand the contribution of each component in our framework, we perform ablation experiments on the S3E, GEODE, MARS-LVIG datasets, as well as our self-collected dataset. All sequences of our self-collected dataset are divided into two sessions. 
We evaluate the performance of three variants in our map merging module: (1) First Pose Graph Optimization only (FPGO), (2) First Pose Graph Optimization and Spatial Bundle Adjustment (FPGO + BA) and (3) the full LEMON-Mapping system (LEMON Full). The RMSE of the ATE (m) is used for quantitative comparison.

Table~\ref{table:ablation} summarizes the results, with the best values highlighted in bold. The complete framework consistently achieves the lowest RMSE, validating the effectiveness of combining spatial BA and two PGO steps. For the variant of FPGO + BA, although local BA improves relative accuracy, it may disrupt the continuity of odometry since it only refines loop regions. Consequently, its standalone use can increase RMSE compared to FPGO alone. However, the last PGO integrates both local BA constraints and odometry continuity, propagating local accuracy to the global map and significantly reducing RMSE.

Fig.~\ref{fig:ablation} shows the map merging results of three variants above in our Flying Arena scene. (a) and (b) demonstrate the function of spatial BA to improve local consistency and accuracy compared to PGO only. (c) and (d) in Fig.~\ref{fig:ablation} further illustrate that maps refined with both BA and last pose graph optimization (LPGO) have better global consistency than those using BA alone, confirming the role of LPGO in maintaining global map structure.

In particular, we visualize the comparison between coarse GICP registration and the proposed fine spatial BA. Fig.~\ref{fig:coarse-fine-compare} (a) shows the result of GICP-based coarse alignment, where only basic submap alignment is achieved, and noticeable misalignments appear in tree trunks, traffic signs, and road boundaries. Fig.~\ref{fig:coarse-fine-compare} (b) shows the result after applying the proposed spatial BA, where submaps become fully consistent with clear edge structures and significantly improved geometric quality. This confirms that spatial BA is a critical component for high-precision multi-robot mapping. While GICP only ensures coarse submap-level alignment, the proposed spatial BA performs fine-grained joint optimization at the pose level, enabling globally consistent and locally accurate map reconstruction. As a result, the originally divergent multi-robot point clouds are refined to achieve highly consistent and precise reconstruction, approaching the geometric fidelity typically observed in short-range single-robot mapping.

\subsection{Scalability Study}
\label{subsection:scalabilitystdudy}

\begin{table}[t]
\centering
\caption{Scalability Study Results}
\label{table:scalability}
    \begin{threeparttable}
        \begin{tabular*}{0.48\textwidth}{@{\hspace{10pt}} @{\extracolsep{\fill}} c c c c @{\hspace{10pt}}}  
        \toprule
        \makecell{Sequence} & 
        \makecell{Robot Number} &
         \makecell{Merged Number} &
         \makecell{Success Rate} \\
        \midrule
        HKU Park & 5 & 5 & \textbf{100\%} \\
        HKU Campus & 10 & 10 & \textbf{100\%} \\
        HKUST Campus & 20 & 20 & \textbf{100\%} \\
        \bottomrule
        \end{tabular*}
    \end{threeparttable}
\end{table}

Most existing map fusion frameworks are limited to scenarios that involve only a few robots (less than five in \cite{LAMM, segmap, dclslam, discoslam}), and their scalability to large-scale deployments is not yet proven. To assess the scalability of our proposed framework, we perform experiments on the R$^3$LIVE dataset, which we subdivide into groups of 5, 10, and 20 sessions. A successful fusion is defined as the correct alignment of each session with all its adjacent sessions that share sufficient map overlap.

Table~\ref{table:scalability} summarizes the corresponding results of three cases. Our framework achieves a success rate 100\% in all experiments, including the five-session, ten-session, and twenty-session scenarios. These results demonstrate the scalability and robustness of the LEMON-Mapping system when dealing with large numbers of multi-robot map merging.

Fig.~\ref{fig:r3live} visualizes the fused map for the five-session case of HKU Park in R$^3$LIVE dataset, including a global point cloud map from a bird’s eye view (BEV) and 6 enlarged map details. The five-session map shows good fusion effect and local accuracy in both structured and unstructured environments.

\subsection{\changed{Runtime and Memory Efficiency Analysis}}
\label{subsec:efficiency_analysis}

\changed{To evaluate computational efficiency and scalability, we compare LEMON-Mapping across five S3E sequences against HBA \cite{hba} and a global full BALM2 \cite{balm2} pipeline.}

\changed{As illustrated in Fig.~\ref{fig:runtime} (middle), our execution time is primarily governed by the number of loop closures rather than the total trajectory length. For instance, the runtime increases in loop-dense sequences such as Campus\_3 (238 loops) and Library (191 loops) due to the frequent activation of local spatial BA windows. Conversely, the Dormitory sequence (62 loops) requires only $\sim$70 seconds to complete. Fig.~\ref{fig:runtime} (top) shows that the computational budget is heavily dominated by HBA ($\sim$67\%--80\%) and cluster preprocessing ($\sim$17\%--28\%, which involves PCA reordering and GICP coarse alignment). This distribution aligns well with our design of focusing computational effort on resolving multi-robot misalignments within critical overlapping regions. In contrast, Isolated DBA and global PGO alignment are highly fast, each consuming less than 5\% of the total runtime.}

\changed{
The results of memory efficiency are demonstrated in the bottom plot of Fig.~\ref{fig:runtime}. LEMON-Mapping maintains a lightweight and bounded memory usage consistently across all sequences, outperforming even the HBA baseline. In contrast, the "Full BA (fused)" pipeline demands a massive 30--50~GiB of memory due to the scale of the global Hessian matrix. Note that executing a conventional global BALM2 at the raw 10Hz rate invariably triggers Out-Of-Memory (OOM) failures. To enable a feasible baseline comparison, it must aggressively aggregate $f$ frames (e.g., $f=30$) into a single submap. This prohibitive memory of full BA further underscores the lightweight nature of our method in terms of memory efficiency.
}

\begin{figure}[!t]
  \centering
  \includegraphics[width=0.96\linewidth]{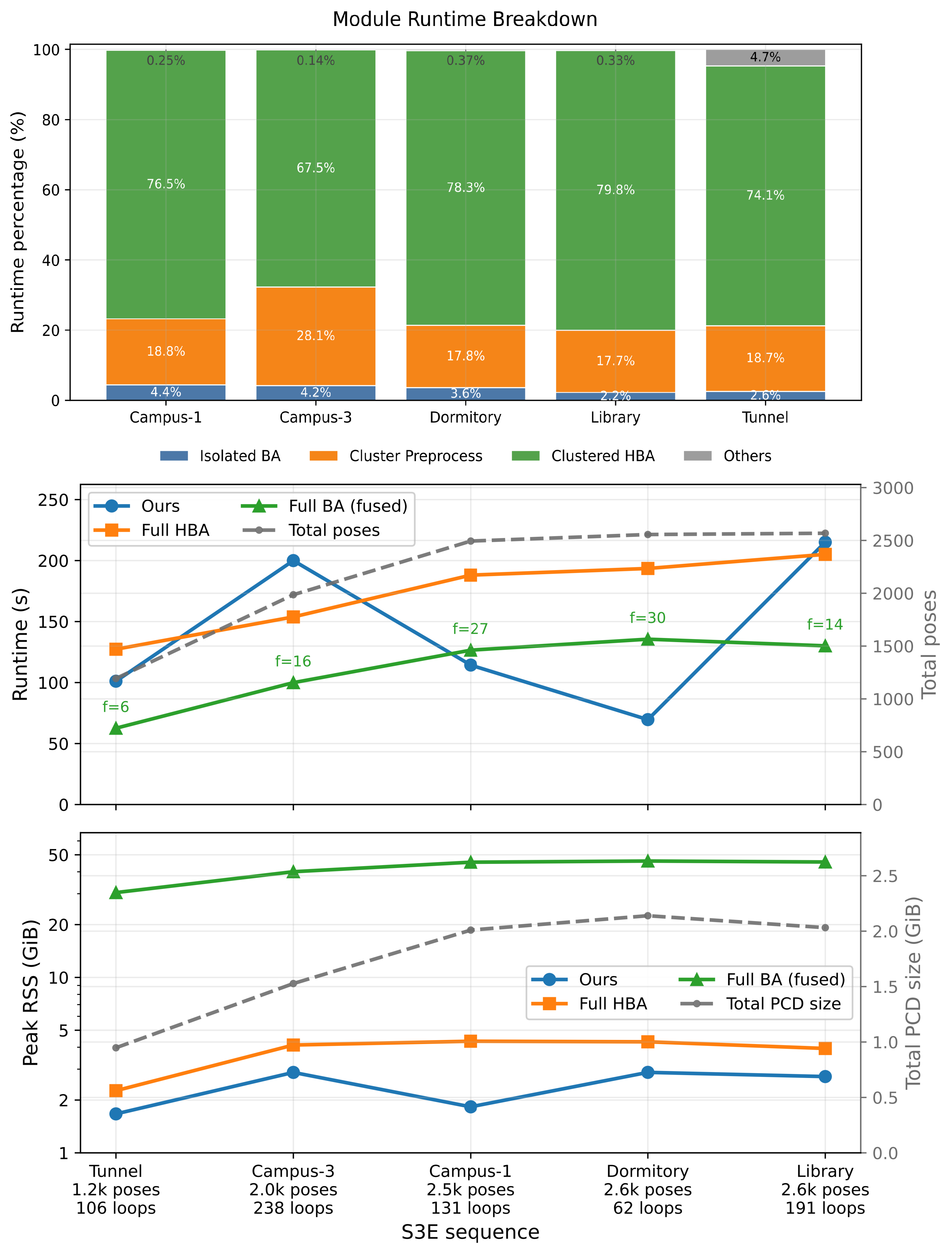}
  \caption{Comprehensive efficiency evaluation on S3E datasets.
  (Top) Percentage runtime breakdown of individual modules within our framework. 
  (Middle) Computational runtime comparison across different sequences, with the total number of poses, loop closures, and the Full BA downsampling factor $f$ explicitly annotated for each dataset. 
  (Bottom) Peak memory consumption comparison.}
  \label{fig:runtime}
\end{figure}

\section{Conclusion and Future Work}
This paper presents \textbf{LEMON-Mapping}, a \textbf{L}oop-\textbf{E}nhanced Large-Scale Multi-Session Point Cloud Map \textbf{M}erging and \textbf{O}ptimizatio\textbf{N} framework for Globally Consistent Mapping. LEMON-Mapping is a scalable framework for a large number of robots. 
Unlike existing pose-only methods, it utilizes geometric constraints from overlapping point clouds for accuracy transfer. It features a loop processing module for reliable selection and recall, and a map merging module driven by two-step pose graph optimization and a window-based spatial bundle adjustment. This architecture enables geometry-aware, globally consistent optimization across unordered cross-robot trajectories.
\changed{Future work will focus on two main directions. First, to further improve the accuracy of multi-session mapping, we plan to develop a principled, quantitative covariance estimation strategy through probabilistic modeling of point cloud residuals, deriving dimensionally consistent information matrices to replace current heuristic weights. Second, leveraging the exceptionally low memory footprint of our localized Spatial BA, we plan to introduce a multi-threaded parallel computing architecture. By optimizing independent spatial windows simultaneously, we aim to drastically reduce the overall runtime and achieve a highly optimal time-space trade-off for massive-scale deployments.}

\section*{Appendix: PROOF OF LEMMA}
\label{appendix}
    



\subsection{Proof of Lemma \ref{lemma:compute_DBA}}
\label{appendix:lemma2}
\begin{proof}

The complexity of joint BA over all \(M\) poses is
\[
\mathcal{C}_{\mathrm{joint}}
= O\!\left(M_f M + M_f M^2 + M^3\right),
\]
where the three terms correspond to evaluating feature residuals, forming the 
Hessian, and solving the LM linear system.

Expanding the squared and cubic terms via group sizes \(m_i\) yields
\[
M^2 = \Big(\sum_i m_i\Big)^2
    = \sum_i m_i^2 + 2\sum_{i<j} m_i m_j,
\]
\[
M^3 = \Big(\sum_i m_i\Big)^3
    = \sum_i m_i^3 + \text{(mixed cross-group terms)}.
\]
Thus joint BA includes not only the per-group contributions \(\sum_i m_i^2\) and 
\(\sum_i m_i^3\), but also all cross-group interaction terms such as 
\(m_i m_j\), \(m_i^2 m_j\), and \(m_i m_j m_k\), which dominate when multiple 
groups have comparable size.

In DBA, only the poses in the current diffusion group \(i\) remain active while 
all inner groups are frozen. The BA performed in diffusion step \(i\) therefore 
has complexity
\[
O\!\left(M_f m_i + M_f m_i^2 + m_i^3\right).
\]
Summing across all diffusion steps gives the total DBA cost
\[
\mathcal{C}_{\mathrm{DBA}}
= O\!\left(M_f\sum_i m_i \;+\; M_f\sum_i m_i^2 \;+\; \sum_i m_i^3\right),
\]
which eliminates all cross-group terms present in 
\(\mathcal{C}_{\mathrm{joint}}\). Since
\[
\sum_i m_i^2 \le M^2, \qquad \sum_i m_i^3 \le M^3,
\]
DBA is never more expensive than joint BA.

Moreover, when groups are approximately balanced with \(m_i \approx M/D\), we obtain
\[
\sum_i m_i^2 \approx \frac{M^2}{D}, \qquad 
\sum_i m_i^3 \approx \frac{M^3}{D^2},
\]
showing that the dominating quadratic and cubic terms are reduced by factors of 
order \(D\) and \(D^2\), respectively. This proves that DBA yields a strictly 
lower computational cost and achieves substantial practical speedups.
\end{proof}

\subsection{Proof of Lemma \ref{lemma:covariance}}
\label{appendix:lemma3}
\begin{proof}
We proceed in two steps.

\subsubsection{Block-extraction lower bound for the measurement noise term.}
\label{paragraph:bound}
For each measurement cluster the per-measurement contribution is
\[
\mathbf L_{ij}\,\boldsymbol{\Sigma}_{c_{f_{ij}}}\,\mathbf L_{ij}^\top
=
\begin{bmatrix}
{^0\mathbf L_{ij}}\Sigma {^0\mathbf L_{ij}}^\top &
{^0\mathbf L_{ij}}\Sigma {^1\mathbf L_{ij}}^\top \\[4pt]
{^1\mathbf L_{ij}}\Sigma {^0\mathbf L_{ij}}^\top &
{^1\mathbf L_{ij}}\Sigma {^1\mathbf L_{ij}}^\top
\end{bmatrix},
\]
which is positive semidefinite because it is of the form $A\Sigma A^\top$. Subtracting the block-diagonal matrix that keeps only the bottom-right block yields
\[
\mathbf L_{ij}\Sigma\mathbf L_{ij}^\top
-
\begin{bmatrix} \mathbf 0 & \mathbf 0 \\[3pt] \mathbf 0 & {^1\mathbf L_{ij}}\Sigma {^1\mathbf L_{ij}}^\top \end{bmatrix}
=
\begin{bmatrix}
{^0\mathbf L_{ij}}\Sigma {^0\mathbf L_{ij}}^\top &
{^0\mathbf L_{ij}}\Sigma {^1\mathbf L_{ij}}^\top \\[4pt]
{^1\mathbf L_{ij}}\Sigma {^0\mathbf L_{ij}}^\top & \mathbf 0
\end{bmatrix}.
\]
The right-hand side can be written as
\[
\begin{bmatrix} {^0\mathbf L_{ij}} \\[2pt] \mathbf 0 \end{bmatrix}
\Sigma
\begin{bmatrix} {^0\mathbf L_{ij}} \\[2pt] \mathbf 0 \end{bmatrix}^\top
\succeq 0,
\]
hence
\[
\mathbf L_{ij}\Sigma\mathbf L_{ij}^\top
\succeq
\begin{bmatrix} \mathbf 0 & \mathbf 0 \\[3pt] \mathbf 0 & {^1\mathbf L_{ij}}\Sigma {^1\mathbf L_{ij}}^\top \end{bmatrix}.
\]
Summing over all measurements preserves the PSD ordering, and extracting the bottom-right block gives the key inequality
\begin{equation}\label{eq:block-noise-ineq}
\Big[\sum_{i,j}\mathbf L_{ij}\,\boldsymbol{\Sigma}_{c_{f_{ij}}}\,\mathbf L_{ij}^\top\Big]_{11}
\succeq
\sum_{i,j\in\text{group1}} {^1\mathbf L_{ij}}\,\boldsymbol{\Sigma}_{c_{f_{ij}}}\,{^1\mathbf L_{ij}}^\top.
\end{equation}

\subsubsection{Schur-complement ordering for Hessian inverses.}
\label{paragraph:schur}
Write the Schur complement of $\mathbf H_{00}$ in $\mathbf H$:
\[
\mathbf S \triangleq \mathbf H_{11} - \mathbf H_{10}\mathbf H_{00}^{-1}\mathbf H_{01}.
\]
Because $\mathbf H_{10}\mathbf H_{00}^{-1}\mathbf H_{01}\succeq 0$, we have
\[
\mathbf S \preceq \mathbf H_{11}.
\]
For two strict positive definite matrices $\mathbf A\preceq\mathbf B$ it follows that $\mathbf B^{-1}\preceq\mathbf A^{-1}$. Applying this to $\mathbf S\preceq\mathbf H_{11}$ yields
\[
\mathbf H_{11}^{-1}\preceq \mathbf S^{-1}.
\]

\subsubsection{Combine \ref{paragraph:bound} and \ref{paragraph:schur}.}
The bottom-right block of the joint covariance can be written using the Schur complement inverse $\mathbf S^{-1}$:
\[
\boldsymbol{\Sigma}^{\mathrm{joint}}_{11}
= \mathbf S^{-1}\,
\Big[\sum_{i,j}\mathbf L_{ij}\,\boldsymbol{\Sigma}_{c_{f_{ij}}}\,\mathbf L_{ij}^\top\Big]_{11}
\,\mathbf S^{-T}.
\]
Using \eqref{eq:block-noise-ineq} and the inverse-ordering $\mathbf H_{11}^{-1}\preceq\mathbf S^{-1}$ we obtain
\[
\begin{aligned}
\boldsymbol{\Sigma}^{\mathrm{joint}}_{11}
&= \mathbf S^{-1}\Big[\sum_{i,j}\mathbf L_{ij}\Sigma\mathbf L_{ij}^\top\Big]_{11}\mathbf S^{-T} \\
&\succeq \mathbf S^{-1}\Big(\sum_{i,j\in\text{group1}} {^1\mathbf L_{ij}}\Sigma {^1\mathbf L_{ij}}^\top\Big)\mathbf S^{-T} \\
&\succeq \mathbf H_{11}^{-1}\Big(\sum_{i,j\in\text{group1}} {^1\mathbf L_{ij}}\Sigma {^1\mathbf L_{ij}}^\top\Big)\mathbf H_{11}^{-T} \\
&= \boldsymbol{\Sigma}^{\mathrm{DBA}}_{1},
\end{aligned}
\]
where the second PSD inequality follows from left- and right-multiplying the PSD matrix
\(\sum_{i,j\in\text{group1}} {^1\mathbf L_{ij}}\Sigma {^1\mathbf L_{ij}}^\top\)
by $\mathbf S^{-1}$ and noting $\mathbf H_{11}^{-1}\preceq\mathbf S^{-1}$.

Thus \(\boldsymbol{\Sigma}^{\mathrm{DBA}}_{1}\preceq\boldsymbol{\Sigma}^{\mathrm{joint}}_{11}\), which proves the stated ordering.
\end{proof}



\bibliographystyle{IEEEtran}
\bibliography{ref}

@article{LAMM,
  author={Wei, Hairuo and Li, Rundong and Cai, Yixi and Yuan, Chongjian and Ren, Yunfan and Zou, Zuhao and Wu, Huajie and Zheng, Chunran and Zhou, Shunbo and Xue, Kaiwen and Zhang, Fu},
  journal={IEEE Robotics and Automation Letters}, 
  title={Large-Scale Multi-Session Point-Cloud Map Merging}, 
  year={2025}
}

@inproceedings{pcm1,
  author={Mangelson, Joshua G. and Dominic, Derrick and Eustice, Ryan M. and Vasudevan, Ram},
  booktitle={2018 IEEE International Conference on Robotics and Automation (ICRA)}, 
  title={Pairwise Consistent Measurement Set Maximization for Robust Multi-Robot Map Merging}, 
  year={2018}
}

@inproceedings{pcm2,
  author={Forsgren, Brendon and Vasudevan, Ram and Kaess, Michael and McLain, Timothy W. and Mangelson, Joshua G.},
  booktitle={2022 IEEE/RSJ International Conference on Intelligent Robots and Systems (IROS)}, 
  title={Group-$k$ Consistent Measurement Set Maximization for Robust Outlier Detection}, 
  year={2022}
}

@article{pcm3,
author = {Brendon Forsgren and Michael Kaess and Ram Vasudevan and Timothy W. McLain and Joshua G. Mangelson},
title ={Group-k consistent measurement set maximization via maximum clique over k-uniform hypergraphs for robust multi-robot map merging},
journal = {The International Journal of Robotics Research},
volume = {43},
number = {14},
pages = {2245-2273},
year = {2024}
}

@article{automerge,
  author={Yin, Peng and Zhao, Shiqi and Lai, Haowen and Ge, Ruohai and Zhang, Ji and Choset, Howie and Scherer, Sebastian},
  journal={IEEE Transactions on Robotics}, 
  title={AutoMerge: A Framework for Map Assembling and Smoothing in City-Scale Environments}, 
  year={2023},
}

@article{
    SRFlight,
    author = {Yunfan Ren  and Fangcheng Zhu  and Guozheng Lu  and Yixi Cai  and Longji Yin  and Fanze Kong  and Jiarong Lin  and Nan Chen  and Fu Zhang },
    title = {Safety-assured high-speed navigation for MAVs},
    journal = {Science Robotics},
    volume = {10},
    number = {98},
    pages = {eado6187},
    year = {2025},
}

@article{swarmlio2,
  author={Zhu, Fangcheng and Ren, Yunfan and Yin, Longji and Kong, Fanze and Liu, Qingbo and Xue, Ruize and Liu, Wenyi and Cai, Yixi and Lu, Guozheng and Li, Haotian and Zhang, Fu},
  journal={IEEE Transactions on Robotics}, 
  title={Swarm-LIO2: Decentralized Efficient LiDAR-Inertial Odometry for Aerial Swarm Systems}, 
  year={2025},
}

@inproceedings{swarmlio,
  author={Zhu, Fangcheng and Ren, Yunfan and Kong, Fanze and Wu, Huajie and Liang, Siqi and Chen, Nan and Xu, Wei and Zhang, Fu},
  booktitle={2023 IEEE International Conference on Robotics and Automation (ICRA)}, 
  title={Swarm-LIO: Decentralized Swarm LiDAR-inertial Odometry}, 
  year={2023},
}

@article{driving,
  author={Cui, Yaodong and Chen, Ren and Chu, Wenbo and Chen, Long and Tian, Daxin and Li, Ying and Cao, Dongpu},
  journal={IEEE Transactions on Intelligent Transportation Systems}, 
  title={Deep Learning for Image and Point Cloud Fusion in Autonomous Driving: A Review}, 
  year={2022}
}

@inproceedings{rogmap,
  author={Ren, Yunfan and Cai, Yixi and Zhu, Fangcheng and Liang, Siqi and Zhang, Fu},
  booktitle={2024 IEEE/RSJ International Conference on Intelligent Robots and Systems (IROS)}, 
  title={ROG-Map: An Efficient Robocentric Occupancy Grid Map for Large-scene and High-resolution LiDAR-based Motion Planning}, 
  year={2024}
}

@inproceedings{loam,
  title={LOAM: Lidar odometry and mapping in real-time.},
  author={Zhang, Ji and Singh, Sanjiv and others},
  booktitle={Robotics: Science and systems},
  volume={2},
  number={9},
  pages={1--9},
  year={2014},
  organization={Berkeley, CA}
}

@article{fastlio2,
  title={Fast-lio2: Fast direct lidar-inertial odometry},
  author={Xu, Wei and Cai, Yixi and He, Dongjiao and Lin, Jiarong and Zhang, Fu},
  journal={IEEE Transactions on Robotics},
  volume={38},
  number={4},
  pages={2053--2073},
  year={2022},
  publisher={IEEE}
}

@article{slict,
  title={Slict: Multi-input multi-scale surfel-based lidar-inertial continuous-time odometry and mapping},
  author={Nguyen, Thien-Minh and Duberg, Daniel and Jensfelt, Patric and Yuan, Shenghai and Xie, Lihua},
  journal={IEEE Robotics and Automation Letters},
  volume={8},
  number={4},
  pages={2102--2109},
  year={2023},
  publisher={IEEE}
}

@inproceedings{dlio,
  title={Direct lidar-inertial odometry: Lightweight lio with continuous-time motion correction},
  author={Chen, Kenny and Nemiroff, Ryan and Lopez, Brett T},
  booktitle={2023 IEEE international conference on robotics and automation (ICRA)},
  pages={3983--3989},
  year={2023},
  organization={IEEE}
}

@inproceedings{pss-ba,
  author={Li, Jianping and Nguyen, Thien-Minh and Yuan, Shenghai and Xie, Lihua},
  booktitle={2024 IEEE/RSJ International Conference on Intelligent Robots and Systems (IROS)}, 
  title={PSS-BA: LiDAR Bundle Adjustment with Progressive Spatial Smoothing}, 
  year={2024},
  volume={},
  number={},
  pages={1124-1129},
  keywords={Point cloud compression;Bundle adjustment;Geometry;Surface reconstruction;Smoothing methods;Accuracy;Laser radar;Robustness;Kernel;Intelligent robots},
  doi={10.1109/IROS58592.2024.10802614}}

@article{balm1,
  title={Balm: Bundle adjustment for lidar mapping},
  author={Liu, Zheng and Zhang, Fu},
  journal={IEEE Robotics and Automation Letters},
  volume={6},
  number={2},
  pages={3184--3191},
  year={2021},
  publisher={IEEE}
}

@article{balm2,
  author={Liu, Zheng and Liu, Xiyuan and Zhang, Fu},
  journal={IEEE Transactions on Robotics}, 
  title={Efficient and Consistent Bundle Adjustment on Lidar Point Clouds}, 
  year={2023},
  volume={39},
  number={6},
  pages={4366-4386},
  keywords={Laser radar;Bundle adjustment;Point cloud compression;Robots;Geometry;Cost function;Simultaneous localization and mapping;Bundle adjustment (BA);light detection and ranging (lidar);simultaneous localization and mapping (SLAM)},
  doi={10.1109/TRO.2023.3311671}}

@article{hba,
  author={Liu, Xiyuan and Liu, Zheng and Kong, Fanze and Zhang, Fu},
  journal={IEEE Robotics and Automation Letters}, 
  title={Large-Scale LiDAR Consistent Mapping Using Hierarchical LiDAR Bundle Adjustment}, 
  year={2023},
  volume={8},
  number={3},
  pages={1523-1530},
  keywords={Laser radar;Optimization;Point cloud compression;Bundle adjustment;Time complexity;Convergence;Trajectory;Mapping;SLAM;localization},
  doi={10.1109/LRA.2023.3238902}
}

@article{discoslam,
  title={Disco-slam: Distributed scan context-enabled multi-robot lidar slam with two-stage global-local graph optimization},
  author={Huang, Yewei and Shan, Tixiao and Chen, Fanfei and Englot, Brendan},
  journal={IEEE Robotics and Automation Letters},
  volume={7},
  number={2},
  pages={1150--1157},
  year={2021},
  publisher={IEEE}
}

@article{dclslam,
  author={Zhong, Shipeng and Qi, Yuhua and Chen, Zhiqiang and Wu, Jin and Chen, Hongbo and Liu, Ming},
  journal={IEEE Sensors Journal}, 
  title={DCL-SLAM: A Distributed Collaborative LiDAR SLAM Framework for a Robotic Swarm}, 
  year={2024},
  volume={24},
  number={4},
  pages={4786-4797},
  keywords={Simultaneous localization and mapping;Laser radar;Sensors;Optimization;Feature extraction;Trajectory;Odometry;Collaborative localization;distributed framework;place recognition;range sensor},
  doi={10.1109/JSEN.2023.3345541}
}

@inproceedings{liosam,
  author={Shan, Tixiao and Englot, Brendan and Meyers, Drew and Wang, Wei and Ratti, Carlo and Rus, Daniela},
  booktitle={2020 IEEE/RSJ International Conference on Intelligent Robots and Systems (IROS)}, 
  title={LIO-SAM: Tightly-coupled Lidar Inertial Odometry via Smoothing and Mapping}, 
  year={2020},
  volume={},
  number={},
  pages={5135-5142},
  keywords={Laser radar;Smoothing methods;Three-dimensional displays;Real-time systems;Trajectory;Registers;Optimization},
  doi={10.1109/IROS45743.2020.9341176}}

@article{btc,
  author={Yuan, Chongjian and Lin, Jiarong and Liu, Zheng and Wei, Hairuo and Hong, Xiaoping and Zhang, Fu},
  journal={IEEE Transactions on Robotics}, 
  title={BTC: A Binary and Triangle Combined Descriptor for 3-D Place Recognition}, 
  year={2024},
  volume={40},
  number={},
  pages={1580-1599},
  keywords={Point cloud compression;Three-dimensional displays;Laser radar;Feature extraction;Robots;Simultaneous localization and mapping;Robustness;Localization;mapping;recognition;simultaneous localization and mapping (SLAM)},
  doi={10.1109/TRO.2024.3353076}}

@article{overlapnet,
  title={OverlapNet: Loop closing for LiDAR-based SLAM},
  author={Chen, Xieyuanli and L{\"a}be, Thomas and Milioto, Andres and R{\"o}hling, Timo and Vysotska, Olga and Haag, Alexandre and Behley, Jens and Stachniss, Cyrill},
  journal={arXiv preprint arXiv:2105.11344},
  year={2021}
}

@inproceedings{g2o,
  author={Kümmerle, Rainer and Grisetti, Giorgio and Strasdat, Hauke and Konolige, Kurt and Burgard, Wolfram},
  booktitle={2011 IEEE International Conference on Robotics and Automation}, 
  title={G2o: A general framework for graph optimization}, 
  year={2011},
  volume={},
  number={},
  pages={3607-3613},
  keywords={Optimization;Simultaneous localization and mapping;Sparse matrices;Jacobian matrices;Barium;Linear systems},
  doi={10.1109/ICRA.2011.5979949}}

@inproceedings{std,
  author={Yuan, Chongjian and Lin, Jiarong and Zou, Zuhao and Hong, Xiaoping and Zhang, Fu},
  booktitle={2023 IEEE International Conference on Robotics and Automation (ICRA)}, 
  title={STD: Stable Triangle Descriptor for 3D place recognition}, 
  year={2023},
  volume={},
  number={},
  pages={1897-1903},
  keywords={Point cloud compression;Three-dimensional displays;Laser radar;Codes;Automation;Shape;Databases},
  doi={10.1109/ICRA48891.2023.10160413}
}

@INPROCEEDINGS{pgslam1,
  author={Sünderhauf, Niko and Protzel, Peter},
  booktitle={2012 IEEE International Conference on Robotics and Automation}, 
  title={Towards a robust back-end for pose graph SLAM}, 
  year={2012},
  volume={},
  number={},
  pages={1254-1261},
  keywords={Switches;Simultaneous localization and mapping;Robustness;Trajectory;Cost function},
  doi={10.1109/ICRA.2012.6224709}}

@INPROCEEDINGS{pgslam2,
  author={Mendes, Ellon and Koch, Pierrick and Lacroix, Simon},
  booktitle={2016 IEEE International Symposium on Safety, Security, and Rescue Robotics (SSRR)}, 
  title={ICP-based pose-graph SLAM}, 
  year={2016},
  volume={},
  number={},
  pages={195-200},
  keywords={Iterative closest point algorithm;Three-dimensional displays;Robot sensing systems;Laser radar;Optimization},
  doi={10.1109/SSRR.2016.7784298}}

@article{mdetector,
  title={Moving event detection from lidar point streams},
  author={Wu, Huajie and Li, Yihang and Xu, Wei and Kong, Fanze and Zhang, Fu},
  journal={nature communications},
  volume={15},
  number={1},
  pages={345},
  year={2024},
  publisher={Nature Publishing Group UK London}
}

@article{ransac,
author = {Fischler, Martin A. and Bolles, Robert C.},
title = {Random sample consensus: a paradigm for model fitting with applications to image analysis and automated cartography},
year = {1981},
journal = {Commun. ACM},
}

@ARTICLE{rsoba,
  author={Ma, Tingchen and Xia, Bingyi and Ou, Yongsheng and Wang, Jiankun and Xu, Sheng},
  journal={IEEE Transactions on Automation Science and Engineering}, 
  title={Robust Second-order LiDAR Bundle Adjustment Algorithm Using Mean Squared Group Metric}, 
  year={2025},
  volume={},
  number={},
  pages={1-1},
  keywords={Laser radar;Measurement;Point cloud compression;Simultaneous localization and mapping;Accuracy;Estimation;Bundle adjustment;Kernel;Robustness;Computational modeling;Explicit LiDAR bundle adjustment;mean square group metric;robust kernel function;second-order state estimation},
  doi={10.1109/TASE.2025.3541199}}

@inproceedings{segmap, series={RSS2018},
   title={SegMap: 3D Segment Mapping using Data-Driven Descriptors},
   url={http://dx.doi.org/10.15607/RSS.2018.XIV.003},
   DOI={10.15607/rss.2018.xiv.003},
   booktitle={Robotics: Science and Systems XIV},
   publisher={Robotics: Science and Systems Foundation},
   author={Dubé, Renaud and Cramariuc, Andrei and Dugas, Daniel and Nieto, Juan and Siegwart, Roland and Cadena, Cesar},
   year={2018},
   month=jun, collection={RSS2018} }

@article{ring++,
  title={Ring++: Roto-translation invariant gram for global localization on a sparse scan map},
  author={Xu, Xuecheng and Lu, Sha and Wu, Jun and Lu, Haojian and Zhu, Qiuguo and Liao, Yiyi and Xiong, Rong and Wang, Yue},
  journal={IEEE Transactions on Robotics},
  volume={39},
  number={6},
  pages={4616--4635},
  year={2023},
  publisher={IEEE}
}

@inproceedings{gicp,
  title={Generalized-icp.},
  author={Segal, Aleksandr and Haehnel, Dirk and Thrun, Sebastian},
  booktitle={Robotics: science and systems},
  volume={2},
  number={4},
  pages={435},
  year={2009},
  organization={Seattle, WA}
}

@article{geode,
  title={Heterogeneous lidar dataset for benchmarking robust localization in diverse degenerate scenarios},
  author={Chen, Zhiqiang and Qi, Yuhua and Feng, Dapeng and Zhuang, Xuebin and Chen, Hongbo and Hu, Xiangcheng and Wu, Jin and Peng, Kelin and Lu, Peng},
  journal={arXiv preprint arXiv:2409.04961},
  year={2024}
}

@article{pss-goso,
  title={Graph Optimality-Aware Stochastic LiDAR Bundle Adjustment with Progressive Spatial Smoothing},
  author={Li, Jianping and Nguyen, Thien-Minh and Cao, Muqing and Yuan, Shenghai and Hung, Tzu-Yi and Xie, Lihua},
  journal={arXiv preprint arXiv:2410.14565},
  year={2024}
}

@article{sor,
  title={Towards 3D point cloud based object maps for household environments},
  author={Rusu, Radu Bogdan and Marton, Zoltan Csaba and Blodow, Nico and Dolha, Mihai and Beetz, Michael},
  journal={Robotics and Autonomous Systems},
  volume={56},
  number={11},
  pages={927--941},
  year={2008},
  publisher={Elsevier}
}

@article{s3e,
  title={S3E: A Multi-Robot Multimodal Dataset for Collaborative SLAM},
  author={Feng, Dapeng and Qi, Yuhua and Zhong, Shipeng and Chen, Zhiqiang and Chen, Qiming and Chen, Hongbo and Wu, Jin and Ma, Jun},
  journal={IEEE Robotics and Automation Letters},
  year={2024},
  publisher={IEEE}
}

@article{lvig,
  title={MARS-LVIG dataset: A multi-sensor aerial robots SLAM dataset for LiDAR-visual-inertial-GNSS fusion},
  author={Li, Haotian and Zou, Yuying and Chen, Nan and Lin, Jiarong and Liu, Xiyuan and Xu, Wei and Zheng, Chunran and Li, Rundong and He, Dongjiao and Kong, Fanze and others},
  journal={The International Journal of Robotics Research},
  volume={43},
  number={8},
  pages={1114--1127},
  year={2024},
  publisher={SAGE Publications Sage UK: London, England}
}

@article{mapeval,
  author={Hu, Xiangcheng and Wu, Jin and Jia, Mingkai and Yan, Hongyu and Jiang, Yi and Jiang, Binqian and Zhang, Wei and He, Wei and Tan, Ping},
  journal={IEEE Robotics and Automation Letters}, 
  title={MapEval: Towards Unified, Robust and Efficient SLAM Map Evaluation Framework}, 
  year={2025},
  volume={10},
  number={5},
  pages={4228-4235},
  doi={10.1109/LRA.2025.3548441}}

@inproceedings{r3live,
  author={Lin, Jiarong and Zhang, Fu},
  booktitle={2022 International Conference on Robotics and Automation (ICRA)}, 
  title={R3LIVE: A Robust, Real-time, RGB-colored, LiDAR-Inertial-Visual tightly-coupled state Estimation and mapping package}, 
  year={2022},
  volume={},
  number={},
  pages={10672-10678},
  keywords={Visualization;Three-dimensional displays;Laser radar;Streaming media;Sensor fusion;Robot sensing systems;Real-time systems},
  doi={10.1109/ICRA46639.2022.9811935}}

@article{iplanner,
  title={iPlanner: Imperative Path Planning},
  author={Yang, Fan and Wang, Chen and Cadena, Cesar and Hutter, Marco},
  journal={Proceedings of Robotics: Science and System XIX},
  pages={064},
  year={2023},
  publisher={Robotics Science \& Systems Foundation}
}

@INPROCEEDINGS{viplanner,
  author={Roth, Pascal and Nubert, Julian and Yang, Fan and Mittal, Mayank and Hutter, Marco},
  booktitle={2024 IEEE International Conference on Robotics and Automation (ICRA)}, 
  title={ViPlanner: Visual Semantic Imperative Learning for Local Navigation}, 
  year={2024},
  volume={},
  number={},
  pages={5243-5249},
  keywords={Training;Resistance;Visualization;Navigation;Semantics;Training data;Stairs},
  doi={10.1109/ICRA57147.2024.10610025}}

@article{fasterlio,
  title={Faster-LIO: Lightweight tightly coupled LiDAR-inertial odometry using parallel sparse incremental voxels},
  author={Bai, Chunge and Xiao, Tao and Chen, Yajie and Wang, Haoqian and Zhang, Fang and Gao, Xiang},
  journal={IEEE Robotics and Automation Letters},
  volume={7},
  number={2},
  pages={4861--4868},
  year={2022},
  publisher={IEEE}
}

@article{pointlio,
  title={Point-LIO: Robust high-bandwidth light detection and ranging inertial odometry},
  author={He, Dongjiao and Xu, Wei and Chen, Nan and Kong, Fanze and Yuan, Chongjian and Zhang, Fu},
  journal={Advanced Intelligent Systems},
  volume={5},
  number={7},
  pages={2200459},
  year={2023},
  publisher={Wiley Online Library}
}

@inproceedings{uavrescue,
  title={An autonomous multi-UAV system for search and rescue},
  author={Scherer, J{\"u}rgen and Yahyanejad, Saeed and Hayat, Samira and Yanmaz, Evsen and Andre, Torsten and Khan, Asif and Vukadinovic, Vladimir and Bettstetter, Christian and Hellwagner, Hermann and Rinner, Bernhard},
  booktitle={Proceedings of the first workshop on micro aerial vehicle networks, systems, and applications for civilian use},
  pages={33--38},
  year={2015}
}

@article{forest,
  title={A Multi-Drone System Proof of Concept for Forestry Applications},
  author={Ara{\'u}jo, Andr{\'e} G and Pizzino, Carlos AP and Couceiro, Micael S and Rocha, Rui P},
  journal={Drones},
  volume={9},
  number={2},
  pages={80},
  year={2025},
  publisher={MDPI}
}

@inproceedings{darpa,
  title={Darpa subterranean challenge: Multi-robotic exploration of underground environments},
  author={Rou{\v{c}}ek, Tom{\'a}{\v{s}} and Pecka, Martin and {\v{C}}{\'\i}{\v{z}}ek, Petr and Pet{\v{r}}{\'\i}{\v{c}}ek, Tom{\'a}{\v{s}} and Bayer, Jan and {\v{S}}alansk{\`y}, Vojt{\v{e}}ch and He{\v{r}}t, Daniel and Petrl{\'\i}k, Mat{\v{e}}j and B{\'a}{\v{c}}a, Tom{\'a}{\v{s}} and Spurn{\`y}, Voj{\v{e}}ch and others},
  booktitle={Modelling and Simulation for Autonomous Systems: 6th International Conference, MESAS 2019, Palermo, Italy, October 29--31, 2019, Revised Selected Papers 6},
  pages={274--290},
  year={2020},
  organization={Springer}
}

@article{pca,
  title={Principal components analysis (PCA)},
  author={Ma{\'c}kiewicz, Andrzej and Ratajczak, Waldemar},
  journal={Computers \& Geosciences},
  volume={19},
  number={3},
  pages={303--342},
  year={1993},
  publisher={Elsevier}
}

@inproceedings{ros,
  title={ROS: an open-source Robot Operating System},
  author={Quigley, Morgan and Conley, Ken and Gerkey, Brian and Faust, Josh and Foote, Tully and Leibs, Jeremy and Wheeler, Rob and Ng, Andrew Y and others},
  booktitle={ICRA workshop on open source software},
  volume={3},
  number={3.2},
  pages={5},
  year={2009},
  organization={Kobe}
}

@inproceedings{smmr-explore,
  title={Smmr-explore: Submap-based multi-robot exploration system with multi-robot multi-target potential field exploration method},
  author={Yu, Jincheng and Tong, Jianming and Xu, Yuanfan and Xu, Zhilin and Dong, Haolin and Yang, Tianxiang and Wang, Yu},
  booktitle={2021 IEEE International Conference on Robotics and Automation (ICRA)},
  pages={8779--8785},
  year={2021},
  organization={IEEE}
}

@inproceedings{MR-GMMExplore,
  title={MR-GMMExplore: Multi-robot exploration system in unknown environments based on Gaussian mixture model},
  author={Wu, Yichun and Gu, Qiuyi and Yu, Jincheng and Ge, Guangjun and Wang, Jian and Liao, Qingmin and Zhang, Chun and Wang, Yu},
  booktitle={2022 IEEE International Conference on Robotics and Biomimetics (ROBIO)},
  pages={1198--1203},
  year={2022},
  organization={IEEE}
}

@inproceedings{rised,
  title={RISED: Accurate and Efficient RGB-Colorized Mapping Using Image Selection and Point Cloud Densification},
  author={Jiang, Changjian and Wang, Lijie and Wan, Zeyu and Gao, Ruilan and Wang, Yue and Xiong, Rong and Zhang, Yu},
  booktitle={2025 IEEE International Conference on Robotics and Automation (ICRA)},
  pages={3277--3283},
  year={2025},
  organization={IEEE}
}

@article{lidarvggt,
  title={LiDAR-VGGT: Cross-Modal Coarse-to-Fine Fusion for Globally Consistent and Metric-Scale Dense Mapping},
  author={Wang, Lijie and Guo, Lianjie and Xu, Ziyi and Wang, Qianhao and Gao, Fei and Chen, Xieyuanli},
  journal={IEEE Robotics and Automation Letters},
  year={2026},
  publisher={IEEE}
}

@article{gnc,
  title={Graduated non-convexity for robust spatial perception: From non-minimal solvers to global outlier rejection},
  author={Yang, Heng and Antonante, Pasquale and Tzoumas, Vasileios and Carlone, Luca},
  journal={IEEE Robotics and Automation Letters},
  volume={5},
  number={2},
  pages={1127--1134},
  year={2020},
  publisher={IEEE}
}

@article{lta-om,
  title={LTA-OM: Long-term association LiDAR--IMU odometry and mapping},
  author={Zou, Zuhao and Yuan, Chongjian and Xu, Wei and Li, Haotian and Zhou, Shunbo and Xue, Kaiwen and Zhang, Fu},
  journal={Journal of Field Robotics},
  volume={41},
  number={7},
  pages={2455--2474},
  year={2024},
  publisher={Wiley Online Library}
}

@article{slim,
  title={SLIM: Scalable and lightweight LiDAR mapping in urban environments},
  author={Yu, Zehuan and Qiao, Zhijian and Liu, Wenyi and Yin, Huan and Shen, Shaojie},
  journal={IEEE Transactions on Robotics},
  year={2025},
  publisher={IEEE}
}

@article{BALM3.0,
  title={Efficient and distributed large-scale point cloud bundle adjustment via majorization-minimization},
  author={Li, Rundong and Liu, Zheng and Wei, Hairuo and Cai, Yixi and Li, Haotian and Zhang, Fu},
  journal={The International Journal of Robotics Research},
  pages={02783649251398874},
  year={2016},
  publisher={SAGE Publications Sage UK: London, England}
}

@article{voxel-slam,
  title={Voxel-SLAM: A Complete, Accurate, and Versatile Light Detection and Ranging-Inertial Simultaneous Localization and Mapping System},
  author={Liu, Zheng and Li, Haotian and Yuan, Chongjian and Liu, Xiyuan and Lin, Jiarong and Li, Rundong and Zheng, Chunran and Zhou, Bingyang and Liu, Wenyi and Zhang, Fu},
  journal={Advanced Intelligent Systems},
  pages={e202501081},
  year={2026},
  publisher={Wiley Online Library}
}

\vfill

\end{document}